\newtheorem{fact}{Fact}
\newtheorem{proposition}{Proposition}
\newtheorem{lemma}{Lemma}
\newtheorem{thm}{Theorem}
\newtheorem{example}{Example}
\newcommand{\Ab}{\bm{A}}
\newcommand{\Bb}{\bm{B}}
\newcommand{\Pb}{\bm{P}}
\newcommand{\Cb}{\bm{C}}
\newcommand{\Qb}{\bm{Q}}
\newcommand{\Ob}{\bm{O}}
\newcommand{\Ib}{\bm{I}}
\newcommand{\Mb}{\bm{M}}
\newcommand{\Ub}{\bm{U}}
\newcommand{\Lb}{\bm{L}}
\newcommand{\Sb}{\bm{S}}
\newcommand{\Vb}{\bm{V}}
\newcommand{\Wb}{\bm{W}}
\newcommand{\Xb}{\bm{X}}
\newcommand{\Sigmab}{\bm{\Sigma}}
\newcommand{\Lambdab}{\bm{\Lambda}}
\newcommand{\Yb}{\bm{Y}}
\newcommand{\eb}{\mathbf{e}}
\newcommand{\yb}{\mathbf{y}}
\newcommand{\lb}{\bm{\ell}}
\newcommand{\ab}{\mathbf{a}}
\newcommand{\ub}{\mathbf{u}}
\newcommand{\vb}{\mathbf{v}}
\newcommand{\RR}{\mathds{R}}
\newcommand{\Xc}{\mathcal{X}}
\newcommand{\Kc}{\mathcal{K}}
\newcommand{\Oc}{\mathcal{O}}
\newcommand{\Lc}{\mathcal{L}}
\newcommand{\Pc}{\,\mathcal{P}}
\newcommand{\ootimes}{\!\otimes\!}
\newcommand{\ttop}{{\!\top}\!}
\newcommand{\col}{\mathrm{col}}
\newcommand{\Tr}{\mathrm{Tr}}
\newcommand{\zero}{\mathbf{0}}
\newcommand{\vecz}[1]{\mathrm{vec}\!\left(#1 \right)}
\newcommand{\rank}[1]{\mathrm{rank}(#1)}
\newcommand{\norm}[1]{\left\| #1 \right\|}
\title{Critical Points of Neural Networks: Analytical Forms and Landscape Properties}
\author{Yi Zhou \& Yingbin Liang \\
Department of Electrical Computer Engineering\\
Ohio State University\\
\texttt{zhou.1172@osu.edu} \\
}
\begin{document}

\maketitle

\begin{abstract}
Due to the success of deep learning to solving a variety of challenging machine learning tasks, there is a rising interest in understanding loss functions for training neural networks from a theoretical aspect. Particularly, the properties of critical points and the landscape around them are of importance to determine the convergence performance of optimization algorithms. In this paper, we provide full (necessary and sufficient) characterization of the analytical forms for the critical points (as well as global minimizers) of the square loss functions for various neural networks. We show that the analytical forms of the critical points characterize the values of the corresponding loss functions as well as the necessary and sufficient conditions to achieve global minimum. Furthermore, we exploit the analytical forms of the critical points to characterize the landscape properties for the loss functions of these neural networks. One particular conclusion is that: The loss function of linear networks has no spurious local minimum, while the loss function of one-hidden-layer nonlinear networks with ReLU activation function does have local minimum that is not global minimum.

\end{abstract}

\section{Introduction}
In the past decade, deep neural networks \cite{deeplearning_book} have become a popular tool that has successfully solved many challenging tasks in a variety of areas such as machine learning, artificial intelligence, computer vision, and natural language processing, etc. As the understandings of deep neural networks from different aspects are mostly based on empirical studies, there is a rising need and interest to develop understandings of neural networks from theoretical aspects such as generalization error, representation power, and landscape (also referred to as geometry) properties, etc. In particular, the landscape properties 
of loss functions (that are typically nonconex for neural networks) play a central role to determine the iteration path and convergence performance of optimization algorithms.


One major landscape property is the nature of critical points, which can possibly be global minima, local minima, saddle points. There have been intensive efforts in the past into understanding such an issue for various neural networks. For example, it has been shown that every local minimum of the loss function is also a global minimum for shallow linear networks under the autoencoder setting and invertibility assumptions \cite{Baldi_1989_1} and for deep linear networks \cite{Kawaguchi_2016,Lu_2017,Yun_2017} respectively under different assumptions. The conditions on the equivalence between local minimum or critical point and global minimum has also been established for various nonlinear neural networks \cite{Yu_1995,Gori_1992,Nguyen_2017,Soudry_2016,Feizi_2015} under respective assumptions. 

However, most previous studies did not provide characterization of analytical forms for critical points of loss functions for neural networks with only very few exceptions. In \cite{Baldi_1989_1}, the authors provided an analytical form for the critical points of the square loss function of shallow linear networks under certain conditions. Such an analytical form further helps to establish the landscape properties around the critical points. 


The focus of this paper is on characterizing the analytical forms of critical points for much broader neural network scenarios, i.e., shallow and deep linear networks with no assumptions on data matrices and network dimensions, and shallow nonlinear networks over certain parameter space. In particular, such analytical forms of critical points capture the corresponding loss function values and the necessary and sufficient conditions to achieve global minimum. This further enables us to establish new landscape properties around these critical points for the loss function of neural networks under general settings, and provides alternative (yet simpler and more intuitive) proofs for existing understanding of the landscape properties of neural networks.




\subsection*{Our Contribution}
We summarize our contributions in detail as follows.

1) For the square loss function of linear networks with one hidden layer, we provide a full (necessary and sufficient) characterization of the analytical forms for its critical points and global minimizers. These results generalize the characterization in \cite{Baldi_1989_1} to arbitrary network parameter dimensions and any data matrices. Such a generalization further enables us to establish the landscape property, i.e., every local minimum is also a global minimum and all other critical points are saddle points, under no assumptions on parameter dimensions and data matrices. From a technical standpoint, we exploit the analytical forms of critical points to provide a new proof for characterizing the landscape around the critical points under full relaxation of assumptions, where the corresponding approaches in \cite{Baldi_1989_1} are not applicable. As a special case of linear networks, the matrix factorization problem satisfies all these landscape properties. 



2) For the square loss function of deep linear networks, we establish a full (necessary and sufficient) characterization of the analytical forms for its critical points and global minimizers.
 Such characterizations are new and have not been established in the existing art. Furthermore, such analytical form divides the set of non-global-minimum critical points into different categories. We identify the directions along which the loss function value decreases for two categories of the critical points, for which our result directly implies the equivalence between the local minimum and the global minimum. For these cases, our proof generalizes the result in \cite{Kawaguchi_2016} under no assumptions on the network parameter dimensions and data matrices.

3) For the square loss function of one-hidden-layer nonlinear neural networks with ReLU activation function, we provide a full characterization of both the existence and the analytical forms of the critical points in certain types of regions in the parameter space. Particularly, in the case where there is one hidden unit, our results fully characterize the existence and the analytical forms of the critical points in the entire parameter space. Such characterization were not provided in previous work on nonlinear neural networks. Moreover, we apply our results to a  concrete  example to demonstrate that both local minimum that is not a global minimum and local maximum do exist in such a case.

\subsection*{Related Work}
\textbf{Analytical forms of critical points}: Characterizing the analytical form of critical points for loss functions of neural networks dates back to \cite{Baldi_1989_1}, where the authors provided an analytical form of the critical points for the square loss function of linear networks with one hidden layer. 

\textbf{Properties of critical points}: \cite{Baldi_1989_1,Baldi_1989_2} studied the linear autoencoder with one hidden layer and showed the equivalence between the local minimum and the global minimum. Moreover, \cite{Baldi_2012} generalized these results to the complex-valued autoencoder setting. The deep linear networks were studied by some recent work \cite{Kawaguchi_2016,Lu_2017,Yun_2017}, in which the equivalence between the local minimum and the global minimum was established respectively under different assumptions. Particularly, \cite{Yun_2017} established a necessary and sufficient condition for a critical point of the deep linear network to be a global minimum. A similar result was established in \cite{Freeman_2017} for deep linear networks under the setting that the widths of intermediate layers are larger than those of the input and output layers. The effect of regularization on the critical points for a two-layer linear network was studied in \cite{Taghvaei_2017}. 

For nonlinear neural networks, \cite{Yu_1995} studied a nonlinear neural network with one hidden layer and sigmoid activation function, and showed that every local minimum is also a global minimum provided that the number of input units equals the number of data samples. \cite{Gori_1992} considered a class of multi-layer nonlinear networks with a pyramidal structure, and showed that all critical points of full column rank achieve the zero loss when the sample size is less than the input dimension. These results were further generalized to a larger class of nonlinear networks in \cite{Nguyen_2017}, in which they also showed that critical points with non-degenerate Hessian are global minimum. \cite{Choromanska_2015_1,Choromanska_2015_2} connected the loss surface of deep nonlinear networks with the Hamiltonian of the spin-glass model under certain assumptions and characterized the distribution of the local minimum. \cite{Kawaguchi_2016} further eliminated some of the assumptions in \cite{Choromanska_2015_1}, and established the equivalence between the local minimum and the global minimum by reducing the loss function of the deep nonlinear network to that of the deep linear network. \cite{Soudry_2016} showed that a two-layer nonlinear network has no bad differentiable local minimum. \cite{Feizi_2015} studied a one-hidden-layer nonlinear neural network with the parameters restricted in a set of directions of lines, and showed that most local minima are global minima.
\cite{Tian_2017} considered a two-layer ReLU network with Gaussian input data, and showed that critical points in certain region are non-isolated and characterized the critical-point-free regions. 

\textbf{Geometric curvature}: \cite{Moritz_2016} established the gradient dominance condition of deep linear residual networks, and \cite{Yi_2017} further established the gradient dominance condition and regularity condition around the global minimizers for deep linear, deep linear residual and shallow nonlinear networks. \cite{Sihan_2016} studied the property of the Hessian matrix for deep linear residual networks. The local strong convexity property was established  in \cite{Soltanolkotabi_2017} for overparameterized nonlinear networks with one hidden layer and quadratic activation functions, and was established in \cite{Zhong_2017} for a class of nonlinear networks with one hidden layer and Gaussian input data. \cite{Zhong_2017} further established the local linear convergence of gradient descent method with tensor initialization. \cite{Soudry_2017} studied a one-hidden-layer nonlinear network with a single output, and showed that the volume of sub-optimal differentiable local minima is exponentially vanishing in comparison with the volume of global minima. 
\cite{Dauphin_2014} investigated the saddle points in deep neural networks using the results from statistical physics and random matrix theory.

\textbf{Notation:} The pseudoinverse, column space and null space of a matrix $\Mb$ are denoted by $\Mb^\dagger, \col(\Mb)$ and $\ker(\Mb)$, respectively. For any index sets $I , J\subset \mathds{N}$, $\Mb_{I,J}$ denotes the submatrix of $\Mb$ formed by the entries with the row indices in $I$ and the column indices in $J$. For positive integers $i\le j$, we define $i:j = \{i, i+1, \ldots, j-1, j\}$.
The projection operator onto a linear subspace $V$ is denoted by $\Pc_V$.

\section{Linear Neural Networks with One Hidden Layer}\label{sec: linear_shallow}

In this section, we study linear neural networks with one hidden layer. 
Suppose we have an input data matrix $\Xb\in \RR^{d_0 \times m}$ and a corresponding output data matrix $\Yb\in \RR^{d_2 \times m}$, where there are in total $m$ data samples. We are interested in learning a model that maps from $\Xb$ to $\Yb$ via a linear network with one hidden layer. Specifically, we denote the weight parameters between the output layer and the hidden layer of the network as $\Ab_2 \in \RR^{d_2\times d_1}$,  and denote the weight parameters between the hidden layer and the input layer of the network as $\Ab_1 \in \RR^{d_1\times d_0}$. We are interested in the square loss function of this linear network, which is given by
\begin{align*}
	\Lc := \tfrac{1}{2} \norm{\Ab_2\Ab_1\Xb - \Yb}_F^2.
\end{align*}
Note that in a special case where $\Xb = \Ib$, $\Lc$ reduces to a loss function for the matrix factorization problem, to which all our results apply.
The loss function $\Lc$ has been studied in \cite{Baldi_1989_1} under the assumptions that $d_2 = d_0 \ge d_1$ and the matrices $ \Xb\Xb^\ttop, \Yb \Xb^\ttop ( \Xb\Xb^\ttop)^{-1} \Xb\Yb^\ttop$ are invertible. In our study, no assumption is made on either the parameter dimensions or the invertibility of the data matrices. Such full generalization of the results in \cite{Baldi_1989_1} turns out to be critical for our study of nonlinear shallow neural networks in \Cref{sec: nonlinear}. 

We further define $\Sigmab := \Yb \Xb^\dagger\Xb\Yb^\ttop$ and denote its full singular value decomposition as $\Ub\Lambdab \Ub^\ttop$. Suppose that $\Sigmab$ has $r$ distinct positive singular values $\sigma_1 > \cdots > \sigma_r >0$ with multiplicities $m_1, \ldots, m_r$, respectively, and has $\bar{m}$ zero singular values. Recall that $(\Ab_1, \Ab_2)$ is defined to be a critical point of $\Lc$ if $\nabla_{\Ab_1}\Lc = \zero, \nabla_{\Ab_2} \Lc = \zero$. Our first result provides a full characterization of all critical points of $\Lc$. 
\begin{thm}[Characterization of critical points]\label{thm: necess_form}
	 All critical points of $\Lc$ are necessarily and sufficiently characterized by a matrix $\Lb_1\in \RR^{d_1\times d_0}$, a block matrix $\Vb \in \RR^{d_2 \times d_1}$ and an invertible matrix $\Cb \in \RR^{d_1 \times d_1}$ via
	 	\begin{align}
	 	\Ab_1 &= \Cb^{-1} \Vb^\ttop \Ub^\ttop \Yb \Xb^\dagger + \Lb_1 - \Cb^{-1} \Vb^\ttop \Vb\Cb \Lb_1 \Xb\Xb^\dagger \label{eq: form_A1} \\
	 	\Ab_2 &= \Ub \Vb\Cb. \label{eq: form_A2}
	 	\end{align}
	 	Specifically, $\Vb = [\mathrm{diag}(\Vb_1, \ldots, \Vb_r, \overline{\Vb}), \bm{0}^{d_2\times (d_1 - \rank{\Ab_2})} ]$, where both $\Vb_i\in \RR^{m_i\times p_i}$ and $\overline{\Vb} \in \RR^{\bar{m} \times \bar{p}}$ consist of orthonormal columns with $ p_i \le m_i, i=1,\ldots,r, \bar{p} \le \bar{m}$ such that $\sum_{i=1}^r p_i + \bar{p} = \rank{\Ab_2}$, and $\Lb_1, \Vb, \Cb$ satisfy 
	 	\begin{align}
	 	\Pc_{\col(\Ub\Vb)^\perp} \Yb\Xb^\ttop\Lb_1^\ttop\Cb^\ttop\Pc_{\ker(\Vb)} = \zero. \label{eq: sufficiency}
	 	\end{align}
\end{thm}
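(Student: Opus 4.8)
The plan is to prove both directions at once by reducing the two stationarity equations $\nabla_{\Ab_1}\Lc = \zero$ and $\nabla_{\Ab_2}\Lc = \zero$ to a tractable pair of conditions, and then extracting the structure of $\col(\Ab_2)$. Writing the residual as $\bm{R} := \Ab_2\Ab_1\Xb - \Yb$, the gradients are $\nabla_{\Ab_1}\Lc = \Ab_2^\ttop \bm{R}\Xb^\ttop$ and $\nabla_{\Ab_2}\Lc = \bm{R}\Xb^\ttop\Ab_1^\ttop$. First I would treat $\nabla_{\Ab_1}\Lc=\zero$ as a linear equation in $\Ab_1$. Using $\Xb^\ttop(\Xb\Xb^\ttop)^\dagger = \Xb^\dagger$ and $(\Ab_2^\ttop\Ab_2)^\dagger\Ab_2^\ttop = \Ab_2^\dagger$, this equation is equivalent to $\Pc_{\col(\Ab_2^\ttop)}\Ab_1\Xb\Xb^\dagger = \Ab_2^\dagger\Yb\Xb^\dagger$, which pins down the component of $\Ab_1$ in the row space of $\Ab_2$ over $\col(\Xb)$ and leaves the rest free. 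In particular it forces the end-to-end map $\Ab_2\Ab_1\Xb = \Pc_{\col(\Ab_2)}\Yb\Xb^\dagger\Xb$. Substituting this into $\nabla_{\Ab_2}\Lc=\zero$ and using $\Xb^\dagger\Xb\Xb^\ttop=\Xb^\ttop$, the second equation collapses to $\Pc_{\col(\Ab_2)^\perp}\Yb\Xb^\ttop\Ab_1^\ttop = \zero$, i.e. $\col(\Yb\Xb^\ttop\Ab_1^\ttop)\subseteq \col(\Ab_2)$.

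The crux, and the step I expect to be the main obstacle, is to promote this to the statement that $\col(\Ab_2)$ is an invariant subspace of $\Sigmab$. I would combine the two reduced conditions: the first gives $\Pc_{\col(\Ab_2)}\Sigmab = \Ab_2\Ab_1\Xb\Yb^\ttop$, while the transpose of the second gives $\Ab_1\Xb\Yb^\ttop = \Ab_1\Xb\Yb^\ttop\Pc_{\col(\Ab_2)}$. Chaining these yields $\Pc_{\col(\Ab_2)}\Sigmab = \Pc_{\col(\Ab_2)}\Sigmab\Pc_{\col(\Ab_2)}$, and transposing (every factor is symmetric) gives $\Sigmab\Pc_{\col(\Ab_2)} = \Pc_{\col(\Ab_2)}\Sigmab\Pc_{\col(\Ab_2)}$, whence $\Pc_{\col(\Ab_2)}\Sigmab = \Sigmab\Pc_{\col(\Ab_2)}$. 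Since $\Sigmab$ is symmetric, commuting with $\Pc_{\col(\Ab_2)}$ forces $\col(\Ab_2)$ to split as a direct sum of subspaces of the eigenspaces of $\Sigmab$. Choosing an orthonormal basis of $\col(\Ab_2)$ adapted to this splitting and expressing it in the $\Ub$-coordinates produces exactly the block-diagonal $\Vb = [\mathrm{diag}(\Vb_1,\ldots,\Vb_r,\overline\Vb),\zero]$, with $p_i\le m_i$ columns inside the $\sigma_i$-eigenspace and $\bar p\le\bar m$ columns inside the kernel.

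With the block form in hand, I would reconstruct $\Cb$ and $\Ab_1$. Factoring $\Ab_2 = \Ub\Wb\Bb$ with $\Wb$ the orthonormal block part and $\Bb$ of full row rank, I extend the rows of $\Bb$ to a basis of $\RR^{d_1}$ to obtain an invertible $\Cb$ with $\Ab_2 = \Ub\Vb\Cb$, which is \eqref{eq: form_A2}. For \eqref{eq: form_A1} I would show that the affine solution set of $\nabla_{\Ab_1}\Lc=\zero$ equals $\{\Cb^{-1}\Vb^\ttop\Ub^\ttop\Yb\Xb^\dagger + \Lb_1 - \Cb^{-1}\Vb^\ttop\Vb\Cb\Lb_1\Xb\Xb^\dagger : \Lb_1\in\RR^{d_1\times d_0}\}$. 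Indeed, because $\Vb\Vb^\ttop\Vb=\Vb$ and $(\Ib-\Xb\Xb^\dagger)\Xb=\zero$, the whole family yields the same product $\Ab_2\Ab_1\Xb = \Pc_{\col(\Ab_2)}\Yb\Xb^\dagger\Xb$, so every member is a solution; conversely any homogeneous solution $\Ab_1^h$ (those with $\Ab_2\Ab_1^h\Xb=\zero$) is recovered by taking $\Lb_1 = \Ab_1^h$, using that $\Ab_2\Ab_1^h\Xb=\zero$ implies $\Vb^\ttop\Vb\Cb\Ab_1^h\Xb\Xb^\dagger=\zero$.

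Finally, I would substitute \eqref{eq: form_A1}–\eqref{eq: form_A2} into $\Pc_{\col(\Ab_2)^\perp}\Yb\Xb^\ttop\Ab_1^\ttop=\zero$. Since $\Ub$ is orthogonal, $\Pc_{\col(\Ab_2)^\perp} = \Ub\Pc_{\col(\Vb)^\perp}\Ub^\ttop$, and the data-dependent term carries a factor $\Pc_{\col(\Vb)^\perp}\Lambdab\Vb$, which vanishes precisely because each nonzero column of $\Vb$ is an eigenvector of $\Lambdab$ (this is where the block structure is essential). The remaining term reduces to \eqref{eq: sufficiency} after using $\Ib - \Cb^\ttop\Vb^\ttop\Vb\Cb^{-\ttop} = \Cb^\ttop\Pc_{\ker(\Vb)}\Cb^{-\ttop}$, with $\Pc_{\ker(\Vb)} = \Ib - \Vb^\ttop\Vb$, and discarding the invertible factor $\Cb^{-\ttop}$. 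Reading these computations in the forward direction establishes necessity and in the reverse direction establishes sufficiency. The invariant-subspace identity is the delicate point; everything else is bookkeeping with pseudoinverses and the gauge freedom $\Ab_2\mapsto\Ab_2\Cb,\ \Ab_1\mapsto\Cb^{-1}\Ab_1$ that leaves $\Ab_2\Ab_1$ invariant.
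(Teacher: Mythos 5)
Your proposal is correct and follows essentially the same route as the paper's proof: reduce $\nabla_{\Ab_1}\Lc=\zero$ to the affine solution set for $\Ab_1$ and the product identity $\Ab_2\Ab_1\Xb = \Pc_{\col(\Ab_2)}\Yb\Xb^\dagger\Xb$, derive the commutation $\Pc_{\col(\Ab_2)}\Sigmab = \Sigmab\Pc_{\col(\Ab_2)}$ to obtain the block structure of $\Vb$, and then impose the remaining part of $\nabla_{\Ab_2}\Lc=\zero$ to get \cref{eq: sufficiency}. The only cosmetic difference is that you characterize the solution set of the linear system for $\Ab_1$ directly (verifying that $\Cb^{-1}\Vb^\ttop\Ub^\ttop$ acts as a generalized inverse), whereas the paper does this via vectorization and Kronecker-product pseudoinverse identities; both yield the same parameterization.
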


\Cref{thm: necess_form} characterizes the necessary and sufficient forms for all critical points of $\Lc$. The analytical forms in \cref{eq: form_A1,eq: form_A2} allow one to construct a critical point of $\Lc$ by specifying a choice of $\Lb_1, \Vb, \Cb$ that fulfill the condition in \cref{eq: sufficiency}. For example, choosing $\Lb_1 = \zero$ guarantees \cref{eq: sufficiency}, in which case \cref{eq: form_A1,eq: form_A2} yield a critical point $(\Cb^{-1} \Vb^\ttop \Ub^\ttop \Yb \Xb^\dagger, \Ub \Vb\Cb)$ for any invertible matrix $\Cb$ and any block matrix $\Vb$ that takes the form specified in \Cref{thm: necess_form}. 
Intuitively, the matrix $\Cb$ captures the invariance of the product $\Ab_2\Ab_1$ under an invertible transform, and $\Lb_1$ captures the degree of freedom of the solution set for linear systems. 
Thus, the block pattern parameters $p_i, i = 1, \ldots, r, \bar{p}$ of $\Vb$ contain all useful information of the critical points that determine the function value of $\Lc$ as presented in the following proposition.
\begin{proposition}\label{lemma: loss_value}
	Any critical point $(\Ab_1, \Ab_2)$ of $\Lc$ satisfies $\mathcal{L}(\Ab_1, \Ab_2) = \tfrac{1}{2}(\mathrm{Tr}(\Yb\Yb^\ttop) - \sum_{i=1}^{r} p_i \sigma_i)$.
\end{proposition}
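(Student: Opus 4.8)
The plan is to substitute the analytical forms \cref{eq: form_A1,eq: form_A2} directly into $\Lc$ and simplify the product $\Ab_2\Ab_1\Xb$ until the loss value is expressed purely through the block pattern of $\Vb$. First I would compute $\Ab_1\Xb$, using the pseudoinverse identity $\Xb\Xb^\dagger\Xb = \Xb$ to collapse the last term of \cref{eq: form_A1}, which yields $\Ab_1\Xb = \Cb^{-1}\Vb^\ttop\Ub^\ttop\Yb\Xb^\dagger\Xb + (\Ib - \Cb^{-1}\Vb^\ttop\Vb\Cb)\Lb_1\Xb$. Left-multiplying by $\Ab_2 = \Ub\Vb\Cb$ and using the fact that the block-diagonal orthonormal structure of $\Vb$ gives $\Vb\Vb^\ttop\Vb = \Vb$, the entire $\Lb_1$-dependent contribution cancels, leaving $\Ab_2\Ab_1\Xb = \Ub\Vb\Vb^\ttop\Ub^\ttop\Yb\Xb^\dagger\Xb = \Pc_{\col(\Ub\Vb)}\Yb\Xb^\dagger\Xb$. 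This is the key simplification: it shows that the loss value is independent of $\Lb_1$ and $\Cb$ and depends on $\Vb$ only through the projection $P := \Ub\Vb\Vb^\ttop\Ub^\ttop$ onto $\col(\Ub\Vb)$.

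Next I would expand $\Lc = \tfrac12\norm{P\Yb\Xb^\dagger\Xb - \Yb}_F^2$ as a trace. Writing $Q := \Xb^\dagger\Xb$ for the (symmetric, idempotent) projection, each of the three terms involving $P$ reduces to $\pm\Tr(P\Yb Q\Yb^\ttop)$ after using $P^2 = P$, $Q^2 = Q$, and the cyclic invariance of the trace, with net coefficient $-1$. Since $\Yb Q\Yb^\ttop = \Yb\Xb^\dagger\Xb\Yb^\ttop = \Sigmab$ by definition, this collapses to $\Lc = \tfrac12(\Tr(\Yb\Yb^\ttop) - \Tr(P\Sigmab))$.

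Finally I would evaluate $\Tr(P\Sigmab)$ using the full SVD $\Sigmab = \Ub\Lambdab\Ub^\ttop$. Substituting and cancelling $\Ub^\ttop\Ub = \Ib$ gives $\Tr(P\Sigmab) = \Tr(\Vb\Vb^\ttop\Lambdab)$. Because $\Vb$ is block-diagonal with orthonormal columns, $\Vb\Vb^\ttop = \mathrm{diag}(\Vb_1\Vb_1^\ttop,\ldots,\Vb_r\Vb_r^\ttop,\overline{\Vb}\,\overline{\Vb}^\ttop)$ and $\Lambdab$ is the matching block-diagonal of repeated singular values, so the trace splits blockwise into $\sum_{i=1}^r\sigma_i\Tr(\Vb_i\Vb_i^\ttop) = \sum_{i=1}^r p_i\sigma_i$, where $\Tr(\Vb_i\Vb_i^\ttop) = \Tr(\Vb_i^\ttop\Vb_i) = p_i$; the zero-singular-value block $\overline{\Vb}$ contributes nothing. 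This gives exactly $\Lc = \tfrac12(\Tr(\Yb\Yb^\ttop) - \sum_{i=1}^r p_i\sigma_i)$.

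The computation is essentially mechanical, so I do not anticipate a deep conceptual obstacle; the main thing to get right is the bookkeeping. In particular, the cancellation of all $\Lb_1$ terms hinges on the identity $\Vb\Vb^\ttop\Vb = \Vb$, which in turn requires carefully invoking the specific block-orthonormal structure of $\Vb$ prescribed in \Cref{thm: necess_form}, and the collapse of the pseudoinverse products requires consistent use of $\Xb\Xb^\dagger\Xb = \Xb$. It is worth emphasizing that the side condition \cref{eq: sufficiency} is never needed here: the loss value is pinned down by the forms \cref{eq: form_A1,eq: form_A2} alone, which is precisely why the block pattern $(p_1,\ldots,p_r)$ is the only information about a critical point that survives.
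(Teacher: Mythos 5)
Your proof is correct and follows essentially the same route as the paper: both reduce the loss at a critical point to $\tfrac12(\Tr(\Yb\Yb^\ttop) - \Tr(\Pc_{\col(\Ub\Vb)}\Sigmab))$ and then read off $\sum_i p_i\sigma_i$ from the block pattern of $\Vb$ against $\Lambdab$. The only cosmetic difference is that you obtain the key identity $\Ab_2\Ab_1\Xb = \Pc_{\col(\Ub\Vb)}\Yb\Xb^\dagger\Xb$ by substituting the analytical forms directly, whereas the paper derives it from the stationarity equation $\nabla_{\Ab_1}\Lc=\zero$; your closing observation that \cref{eq: sufficiency} is never needed matches the paper's proof as well.
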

\Cref{lemma: loss_value} evaluates the function value $\Lc$ at a critical point using the parameters $\{p_i\}_{i=1}^r$.
To explain further, recall that the data matrix $\Sigmab$ has each singular value $\sigma_i$ with multiplicity $m_i$. For each $i$, the critical point captures $p_i$ out of $m_i$ singular values $\sigma_i$. Hence, for a $\sigma_i$ with larger value (i.e., a smaller index $i$), it is desirable that a critical point captures a larger number $p_i$ of them. In this way, the critical point captures more important principle components of the data so that the value of the loss function is further reduced as suggested by \Cref{lemma: loss_value}. In summary, the parameters $\{p_i\}_{i=1}^r$ characterize how well the learned model fits the data in terms of the value of the loss function. 
Moreover, the parameters $\{p_i\}_{i=1}^r$ also determine a full characterization of the global minimizers as given below.
\begin{proposition}[Characterization of global minimizers]\label{prop: global_min}
A critical point $(\Ab_1, \Ab_2)$ of $\Lc$ is a global minimizer if and only if it falls into the following two cases. 
	\begin{enumerate}[leftmargin=*,topsep=0pt,noitemsep]
		\item Case 1: $\min \{d_2, d_1\} \le \sum_{i=1}^{r} m_i$, $\Ab_2$ is full rank, and $p_1 = m_1, \ldots, p_{k-1} = m_{k-1}, p_k = \rank{\Ab_2} - \sum_{i=1}^{k-1} m_i \le m_k$ for some $k\le r$;
		\item Case 2: $\min \{d_2, d_1\} > \sum_{i=1}^{r} m_i$, $p_i = m_i$ for $i = 1,\ldots,r$, and $\bar{p} \ge 0$. In particular, $\Ab_2$ can be non-full rank with $\rank{\Ab_2} =  \sum_{i=1}^{r} m_i$.
	\end{enumerate}
The analytical form of any global minimizer can be obtained from \Cref{thm: necess_form} with further specification to the above two cases. 
\end{proposition}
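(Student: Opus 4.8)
The plan is to reduce the characterization of global minimizers to a simple combinatorial optimization over the block pattern parameters $\{p_i\}_{i=1}^r$ and $\bar p$. The starting observation is that since $\Lc$ is smooth, every global minimizer is in particular a critical point, and is therefore described by the analytical forms of \Cref{thm: necess_form}. By \Cref{lemma: loss_value}, the value of $\Lc$ at any critical point equals $\tfrac12(\Tr(\Yb\Yb^\ttop) - \sum_{i=1}^r p_i\sigma_i)$, which depends on the critical point only through $\{p_i\}_{i=1}^r$. Consequently, identifying the global minimizers amounts to finding those critical points that maximize $\sum_{i=1}^r p_i \sigma_i$.

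Next I would pin down the feasible set of block parameters. From the block structure of $\Vb$ in \Cref{thm: necess_form}, the parameters must satisfy $0 \le p_i \le m_i$ for $i=1,\ldots,r$ and $0 \le \bar p \le \bar m$, together with $\sum_{i=1}^r p_i + \bar p = \rank{\Ab_2}$; moreover $\rank{\Ab_2} \le \min\{d_1,d_2\}$ since $\Ab_2 \in \RR^{d_2\times d_1}$ and $\Vb$ has $d_1$ columns. Conversely, any tuple satisfying these constraints is realized by a genuine critical point: taking $\Lb_1 = \zero$ fulfills \cref{eq: sufficiency}, so for any invertible $\Cb$ and any $\Vb$ with the prescribed block sizes (which exists precisely because $p_i \le m_i$ and $\bar p \le \bar m$) the forms \cref{eq: form_A1,eq: form_A2} produce a critical point with $\rank{\Ab_2} = \sum_{i=1}^r p_i + \bar p$. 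Thus the problem becomes: maximize $\sum_{i=1}^r p_i \sigma_i$ subject to $0\le p_i\le m_i$, $0\le\bar p\le\bar m$, and $\sum_{i=1}^r p_i + \bar p \le \min\{d_1,d_2\}$.

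I would then solve this by a greedy exchange argument. Because $\sigma_1 > \cdots > \sigma_r > 0$ while the $\bar m$ zero singular values contribute nothing, any unit of budget is best spent on the largest available positive singular value: if a feasible tuple has $p_i < m_i$ and either $p_j > 0$ for some $j > i$ or $\bar p > 0$, then moving one unit into $p_i$ preserves feasibility and strictly increases $\sum_{i=1}^r p_i\sigma_i$ (by $\sigma_i - \sigma_j > 0$ or by $\sigma_i > 0$, respectively). At an optimum no such move is possible, which forces the maximizers into two regimes according to the budget $\min\{d_1,d_2\}$ relative to $\sum_{i=1}^r m_i$. When $\min\{d_1,d_2\}\le \sum_{i=1}^r m_i$, the budget is fully consumed by the positive-singular-value blocks, yielding $\rank{\Ab_2}=\min\{d_1,d_2\}$ (full rank) with $p_1=m_1,\ldots,p_{k-1}=m_{k-1}$ and $p_k = \rank{\Ab_2}-\sum_{i=1}^{k-1}m_i\le m_k$ at the boundary index $k$ --- exactly Case 1. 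When $\min\{d_1,d_2\}> \sum_{i=1}^r m_i$, all positive blocks can be saturated, forcing $p_i=m_i$ for every $i$, while the leftover budget may be assigned arbitrarily to $\bar p$ without changing the objective, so $\bar p\ge 0$ is free and $\rank{\Ab_2}=\sum_{i=1}^r m_i + \bar p \ge \sum_{i=1}^r m_i$ --- exactly Case 2.

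The two directions of the equivalence then follow immediately: a critical point whose parameters lie in Case 1 or Case 2 attains the maximal $\sum_{i=1}^r p_i\sigma_i$ and hence the minimal loss, so it is a global minimizer; conversely a global minimizer is a critical point attaining this maximum, so its parameters must be of the stated form, and its analytical expression is read off from \Cref{thm: necess_form} by substituting the optimal $\{p_i\}$ and $\bar p$. I expect the main obstacle to be the bookkeeping that pins the feasible set down exactly --- in particular justifying both the rank bound $\rank{\Ab_2}\le\min\{d_1,d_2\}$ and the realizability of every admissible block pattern --- since the combinatorial maximization itself is routine once these constraints are correct.
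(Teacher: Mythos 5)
Your proposal is correct and follows essentially the same route as the paper's proof: reduce to maximizing $\sum_{i=1}^r p_i\sigma_i$ via \Cref{lemma: loss_value}, establish realizability of every admissible block pattern by taking $\Lb_1 = \zero$ in \Cref{thm: necess_form}, and then read off the optimal patterns. Your greedy exchange argument makes explicit the combinatorial step that the paper merely asserts, but the underlying reasoning is identical.
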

\Cref{prop: global_min} establishes the neccessary and sufficient conditions for any critical point to be a global minimizer. If the data matrix $\Sigmab$ has a large number of nonzero singular values, i.e., the first case, one needs to exhaust the representation budget (i.e., rank) of $\Ab_2$ and capture as many large singular values as the rank allows to achieve the global minimum; Otherwise, $\Ab_2$ of a global minimizer can be non-full rank and still captures all nonzero singular values. 
Furthermore, the parameters $\{p_i\}_{i=1}^r$ naturally divide all non-global-minimum critical points $(\Ab_1, \Ab_2)$ of $\Lc$ into the following two categories.
\begin{itemize}[leftmargin=*,topsep=0pt,noitemsep]
	\item  (Non-optimal order): The matrix $\Vb$ specified in \Cref{thm: necess_form} satisfies that there exists $1\le i < j \le r$ such that $p_i < m_i$ and $p_j >0$.
	\item  (Optimal order): $\rank{\Ab_2} < \min \{d_2, d_1 \}$ and the matrix $\Vb$ specified in \Cref{thm: necess_form} satisfies that $p_1 = m_1, \ldots, p_{k-1} = m_{k-1}, p_k = \rank{\Ab_2} - \sum_{i=1}^{k-1} m_i \le m_k$ for some $1\le k\le r$.
\end{itemize}
 To understand the above two categories, note that a critical point of $\Lc$ with non-optimal order captures a smaller singular value $\sigma_j$ (since $p_j > 0$) while skipping a larger singular value $\sigma_i$ with a lower index $i<j$ (since $p_i < m_i$), and hence cannot be a global minimizer. On the other hand, although a critical point of $\Lc$ with optimal order captures the singular values in the optimal (i.e., decreasing) order, it does not fully utilize the representation budget of  $\Ab_2$ (because $\Ab_2$ is non-full rank) to further capture nonzero singular values and reduce the function value, and hence cannot be a global minimizer either.
Next, we show that these two types of non-global-minimum critical points have different landscape properties around them. Throughout, a matrix $\widetilde{\Mb}$ is called the perturbation of $\Mb$ if it lies in an arbitrarily small neighborhood of $\Mb$. 
\begin{proposition}[Landscape around critical points]\label{thm: critic_perturb}
	The critical points of $\Lc$ have the following landscape properties.
	\begin{enumerate}[leftmargin=*,topsep=0pt,noitemsep]
		\item  A non-optimal-order critical point $({\Ab_1}, {\Ab_2})$ has a perturbation $(\widetilde{\Ab}_1, \widetilde{\Ab}_2)$ with $\rank{\widetilde{\Ab}_2} = \rank{\Ab}_2$, which achieves a lower function value;
		\item  An optimal-order critical point $({\Ab_1}, {\Ab_2})$ has a perturbation $(\widetilde{\Ab}_1, \widetilde{\Ab}_2)$ with $\rank{\widetilde{\Ab}_2} = \rank{\Ab}_2 + 1$, which achieves a lower function value;
		\item Any point in $\Xc:=\{(\Ab_1, \Ab_2): ~\Ab_2\Ab_1\Xb \ne \zero \}$ has a perturbation $(\Ab_1, \widetilde{\Ab}_2)$, which achieves a higher function value;
	\end{enumerate}
\end{proposition}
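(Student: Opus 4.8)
The engine behind all three claims is a \emph{reduced loss} obtained by minimizing over $\Ab_1$ with $\Ab_2$ held fixed. Since $\{\Ab_2\Ab_1\Xb : \Ab_1\in\RR^{d_1\times d_0}\}$ is exactly the subspace of matrices with column space in $\col(\Ab_2)$ and row space in $\row(\Xb)$, orthogonally projecting $\Yb$ onto it gives
\begin{align}
\min_{\Ab_1}\Lc(\Ab_1,\Ab_2) = \tfrac12\Tr(\Yb\Yb^\ttop) - \tfrac12\Tr\big(\Pc_{\col(\Ab_2)}\Sigmab\big),
\end{align}
attained when $\Ab_2\Ab_1\Xb = \Pc_{\col(\Ab_2)}\Yb\Xb^\dagger\Xb$. (Substituting the critical form of \Cref{thm: necess_form} recovers \Cref{lemma: loss_value}, confirming that $\Ab_1$ is already $\Ab_2$-optimal at every critical point.) This identity shows the loss is governed by how much singular mass of $\Sigmab$ the subspace $\col(\Ab_2)$ captures, so each descent claim reduces to perturbing $\col(\Ab_2)$ to increase $\Tr(\Pc_{\col(\Ab_2)}\Sigmab)$. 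For claim $3$ I would fix $\Ab_1$ and vary only $\Ab_2$: then $\Lc$ is a convex quadratic in $\Ab_2$ whose Hessian is governed by $(\Ab_1\Xb)(\Ab_1\Xb)^\ttop$, which is nonzero because $\Ab_2\Ab_1\Xb\ne\zero$ forces $\Ab_1\Xb\ne\zero$. Hence some direction $\Bb_2$ has $\Bb_2\Ab_1\Xb\ne\zero$, and along $\Ab_2+t\Bb_2$ the loss is a nondegenerate upward parabola in $t$; choosing the sign of a small $t$ (when the linear term is nonzero), or any small $t$ (when it vanishes), strictly raises the value.

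For claim $1$ (non-optimal order) the hypothesis yields indices $i<j$ with $p_i<m_i$ and $p_j>0$, hence a captured direction $\wb$ (unit, $\Sigmab\wb=\sigma_j\wb$, $\wb\in\col(\Ab_2)$) and a free direction $\ub$ (unit, $\Sigmab\ub=\sigma_i\ub$, $\ub\perp\col(\Ab_2)$), with $\sigma_i>\sigma_j$ and $\ub\perp\wb$. I would rotate inside $\mathrm{span}(\ub,\wb)$ by a small angle $\theta$, setting $\tilde\Ab_2=\Qb(\theta)\Ab_2$ where $\Qb(\theta)$ sends $\wb\mapsto\cos\theta\,\wb+\sin\theta\,\ub$ and fixes $\mathrm{span}(\ub,\wb)^\perp$. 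As $\Qb(\theta)$ is orthogonal this preserves $\rank{\tilde\Ab_2}=\rank{\Ab_2}$, and a direct computation gives $\Tr(\Pc_{\col(\tilde\Ab_2)}\Sigmab)=\mathrm{const}+\sigma_j+(\sigma_i-\sigma_j)\sin^2\theta$, strictly increasing in $|\theta|$. Pairing $\tilde\Ab_2$ with a minimizer $\tilde\Ab_1(\theta)$ of the inner least-squares problem (whose solution set varies continuously since the rank is constant), chosen so that $\tilde\Ab_1(0)=\Ab_1$, the reduced-loss identity delivers a same-rank perturbation of strictly smaller loss.

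Claim $2$ (optimal order) is where the real difficulty lies. Because $\rank{\Ab_2}<\min\{d_2,d_1\}$ and the point is not global, there is an uncaptured positive singular direction $\ub$ ($\Sigmab\ub=\sigma\ub$, $\sigma>0$, $\ub\perp\col(\Ab_2)$) and spare rank in $\Ab_2$. The naive move---enlarge $\col(\Ab_2)$ by $\ub$ and re-optimize $\Ab_1$---fails as a perturbation, since fitting a fixed amount of a vanishingly small new column forces $\Ab_1$ to blow up. Instead I would use a balanced perturbation $\tilde\Ab_2=\Ab_2+\sqrt{\theta}\,\Bb_2$, $\tilde\Ab_1=\Ab_1+\sqrt{\theta}\,\Bb_1$ with $\Bb_2=\ub\bb^\ttop$ creating the rank-increasing output direction and $\col(\Bb_1\Xb)\subseteq\ker(\Ab_2)$. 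Writing $\bm E:=\Ab_2\Ab_1\Xb-\Yb$, the critical-point conditions annihilate the first-order term; the design $\bb\perp\col(\Ab_1\Xb)$ together with $\col(\Bb_1\Xb)\subseteq\ker(\Ab_2)$ annihilates the cross term $\Bb_2\Ab_1\Xb+\Ab_2\Bb_1\Xb$; and aligning $\Bb_1\Xb$ with $\bm h\,(\ub^\ttop\Yb\Xb^\dagger\Xb)$ for some $\bm h\in\ker(\Ab_2)$ leaves, via $\ub^\ttop\bm E=-\ub^\ttop\Yb$, the order-$\theta$ term $\inner{\bm E}{\Bb_2\Bb_1\Xb}=-\sigma(\bb^\ttop\bm h)$. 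The main obstacle is the feasibility of these simultaneous choices: one needs $\bb\in\col(\Ab_1\Xb)^\perp$ and $\bm h\in\ker(\Ab_2)$ that are \emph{not} orthogonal, i.e. $\ker(\Ab_2)\not\subseteq\col(\Ab_1\Xb)$, while also ensuring $\rank{\tilde\Ab_2}=\rank{\Ab_2}+1$. I expect this to require genuine use of the optimal-order structure and the strict deficiency $\rank{\Ab_2}<\min\{d_2,d_1\}$, and, in the degenerate subcase where $\col(\Ab_1\Xb)$ fills $\RR^{d_1}$, a separate argument (first perturbing $\Ab_1$ along $\ker(\Ab_2)$-compatible directions to create the needed slack) before applying the balanced perturbation.
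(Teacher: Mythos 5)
Your items 1 and 3 are correct and follow essentially the paper's route: for item 1 the paper performs exactly your two-dimensional rotation (it replaces a column $\ub_{j1}^s$ of $\Ub\Sb$ by $(\ub_{j1}^s+\epsilon\,\ub_{i(p_i+1)}^s)/\sqrt{1+\epsilon^2}$, which is your $\Qb(\theta)$ with $\tan\theta=\epsilon$) and evaluates the loss through the same reduced identity $\Lc=\tfrac12\Tr(\Yb\Yb^\ttop)-\tfrac12\Tr(\Pc_{\col(\Ab_2)}\Sigmab)$; for item 3 the paper also fixes $\Ab_1$ and uses strong convexity along a single line (scaling one nonzero row of $\Ab_2$).

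Item 2 is where you diverge from the paper, and your construction has a genuine gap that you have flagged but not resolved. Your scheme needs a $\bb\in\col(\Ab_1\Xb)^\perp\subset\RR^{d_1}$ with $\bb^\ttop\bm{h}\ne 0$ for some $\bm{h}\in\ker(\Ab_2)$, and this feasibility can fail at bona fide optimal-order critical points. Concretely, take $\Xb=\Ib_3$, $\Yb=\bigl(\begin{smallmatrix}2&0&0\\0&1&0\end{smallmatrix}\bigr)$, $d_1=d_2=2$, $\Cb=\Ib$, $\Vb=[\eb_1,\zero]$, so $\Ab_2=\mathrm{diag}(1,0)$; the condition in \cref{eq: sufficiency} only forces $(\Lb_1)_{2,2}=0$, so $(\Lb_1)_{2,:}=(a,0,c)$ with $c\ne0$ gives $\Ab_1=\bigl(\begin{smallmatrix}2&0&0\\ a&0&c\end{smallmatrix}\bigr)$ of full row rank, whence $\col(\Ab_1\Xb)^\perp=\{\zero\}$ and no admissible $\bb$ exists. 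Your proposed repair --- first perturbing $\Ab_1$ along $\ker(\Ab_2)$-compatible directions to ``create slack'' --- cannot work: the set of full-row-rank matrices is open, so no arbitrarily small perturbation of $\Ab_1$ can lower $\rank{\Ab_1\Xb}$, and a non-infinitesimal reset of the $\Lb_1$-component would take you outside every small neighborhood of $(\Ab_1,\Ab_2)$. The paper sidesteps this entirely by \emph{not} asking the cross term $\Bb_2\Ab_1\Xb$ to vanish: it takes $\bb^\ttop=\Cb_{(\rank{\Ab_2}+1),:}$ (so that $\Ab_2\Bb_1\Xb=\zero$ automatically, since $\Ab_2\Cb^{-1}\eb_{\rank{\Ab_2}+1}=\Ub\Vb\eb_{\rank{\Ab_2}+1}=\zero$), observes that the first-order contribution of $\Bb_2\Ab_1\Xb$ to the loss is killed by criticality together with \cref{eq: sufficiency} (in the form of \cref{eq: suffi_simple1}), and then uses the \emph{asymmetric} scaling $\epsilon_2=\epsilon_1^2$ so that the surviving quadratic residue $\tfrac12\epsilon_2^2\Tr((\Cb\Lb_1)_{(\rank{\Ab_2}+1),:}\Xb\Xb^\ttop(\Cb\Lb_1)_{(\rank{\Ab_2}+1),:}^\ttop)=\Oc(\epsilon_1^4)$ is dominated by the gain $-\epsilon_1\epsilon_2\sigma_k=-\epsilon_1^3\sigma_k$ from the newly captured singular value. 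Replacing your balanced $\sqrt{\theta}$-scaling by this two-parameter scaling, and dropping the orthogonality requirement on $\bb$, would close the gap; as written, your item 2 does not cover all optimal-order critical points.
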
 
As a consequence, items 1 and 2 imply that any non-global-minimum critical point has a descent direction, and hence cannot be a local minimizer. Thus, any local minimizer must be a global minimizer. Item 3 implies that any point has an ascent direction whenever the output is nonzero. Hence, there does not exist any local/global maximizer in $\Xc$. Furthermore, item 3 together with items 1 and 2 implies that any non-global-minimum critical point in $\Xc$ has both descent and ascent directions, and hence must be a saddle point. We summarize these facts in the following theorem.
\begin{thm}[Landscape of $\Lc$]\label{thm: landscape}
The loss function $\Lc$ satisfies: 1) every local minimum is also a global minimum; 2) every non-global-minimum critical point in $\Xc$ is a saddle point.
\end{thm}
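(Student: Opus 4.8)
The plan is to derive \Cref{thm: landscape} as a short logical consequence of \Cref{thm: critic_perturb}, whose three items already contain all the analytic content; the characterization of global minimizers in \Cref{prop: global_min} is invoked only to guarantee that the non-optimal-order and optimal-order categories exhaust every non-global-minimum critical point. Thus no new estimates are needed, and the proof reduces to combining the descent directions of items~1 and~2 with the ascent direction of item~3.

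For the first assertion, I would begin by noting that $\Lc$ is a polynomial in the entries of $\Ab_1$ and $\Ab_2$, hence smooth, so any local minimizer satisfies the first-order condition and is therefore a critical point covered by \Cref{thm: necess_form}. Suppose such a critical point were not a global minimizer. By \Cref{prop: global_min} its pattern $\{p_i\}_{i=1}^r$ fails the conditions of both Case~1 and Case~2, and one checks that it then falls into exactly one of the two categories defined immediately before \Cref{thm: critic_perturb}. In either category, item~1 or item~2 of \Cref{thm: critic_perturb} produces a perturbation $(\widetilde{\Ab}_1, \widetilde{\Ab}_2)$ arbitrarily close to $(\Ab_1, \Ab_2)$ with strictly smaller loss, i.e.\ a descent direction. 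This contradicts local minimality, so every non-global-minimum critical point fails to be a local minimizer; contrapositively, every local minimizer is a global minimizer.

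For the second assertion, recall that a saddle point is a critical point that is neither a local minimizer nor a local maximizer. Fix a non-global-minimum critical point $(\Ab_1, \Ab_2) \in \Xc$. The descent direction just constructed already prevents it from being a local minimizer. Because it lies in $\Xc$, i.e.\ $\Ab_2 \Ab_1 \Xb \ne \zero$, item~3 of \Cref{thm: critic_perturb} supplies a perturbation $(\Ab_1, \widetilde{\Ab}_2)$ with strictly larger loss, an ascent direction, which prevents it from being a local maximizer. Having both an ascent and a descent direction in every neighborhood, the point is a saddle point, completing the proof.

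I expect essentially no analytic obstacle here, since all the hard work lives upstream in \Cref{thm: critic_perturb}; the only step requiring genuine care is confirming that the two categories are exhaustive among non-global-minimum critical points. Concretely, I would verify that negating the Case~1 / Case~2 conditions of \Cref{prop: global_min} forces either an inversion in the capture pattern (non-optimal order) or a rank-deficient $\Ab_2$ with an otherwise optimal prefix pattern (optimal order), being careful to fold the situation where a critical point captures a zero singular value while under-capturing a positive one into the non-optimal-order case. A secondary point is to record that the perturbations of \Cref{thm: critic_perturb} are taken in the full unconstrained space $\RR^{d_1\times d_0}\times \RR^{d_2\times d_1}$, so that neighboring points of lower or higher value genuinely certify non-minimality and non-maximality rather than constrained stationarity.
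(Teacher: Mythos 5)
Your proposal matches the paper's own argument: the paper proves \Cref{thm: landscape} exactly by combining items 1 and 2 of \Cref{thm: critic_perturb} (descent directions at non-optimal-order and optimal-order critical points, hence no spurious local minima) with item 3 (ascent directions in $\Xc$, hence saddle points). Your added care about exhaustiveness of the two categories---in particular folding the case where $\bar p>0$ while some $p_i<m_i$ into the non-optimal-order analysis---is a point the paper asserts without comment, so it is a welcome but not divergent refinement.
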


From a technical point of view, the proof of item 1 of \Cref{thm: critic_perturb} applies that in \cite{Baldi_1989_2} and generalizes it to the setting where $\Sigmab$ can have repeated singular values and may not be invertible. To further understand the perturbation scheme from a high level perspective, note that non-optimal-order critical points capture a smaller singular value $\sigma_j$ instead of a larger one $\sigma_i$ with $i<j$. Thus, one naturally perturbs the singular vector corresponding to $\sigma_j$ along the direction of the singular vector corresponding to $\sigma_i$. Such a perturbation scheme preserves the rank of $\Ab_2$ and reduces the value of the loss function. 

More importantly, the proof of item 2 of \Cref{thm: critic_perturb} introduces a new technique. As a comparison, \cite{Baldi_1989_1} proves a similar result as item 2 using the strict convexity of the function, which requires the parameter dimensions to satisfy $d_2 = d_0 \ge d_1$ and the data matrices to be invertible. In contrast, our proof completely removes these restrictions by introducing a new perturbation direction and exploiting the analytical forms of critical points in \cref{eq: form_A1,eq: form_A2} and the condition in \cref{eq: sufficiency}. The accomplishment of the proof further requires careful choices of perturbation parameters as well as judicious manipulations of matrices. We refer the reader to the supplemental materials for more details. As a high level understanding, since optimal-order critical points capture the singular values in an optimal (i.e., decreasing) order, the previous perturbation scheme for non-optimal-order critical points does not apply. Instead, we increase the rank of $\Ab_2$ by one in a way that the perturbed matrix captures the next singular value beyond the ones that have already been captured so that the value of the loss function can be further reduced. 


\section{Deep Linear Neural Networks}
In this section, we study deep linear networks with $\ell \ge 2$ layers. We denote the weight parameters between the layers as $\Ab_{k} \in \RR^{d_k \times d_{k-1}}$ for $k = 1, \ldots, \ell$, respectively. The input and output data are denoted by $\Xb\in \RR^{d_0 \times m}, \Yb\in \RR^{d_\ell \times m}$, respectively. We are interested in the square loss function of deep linear networks, which is given by 
\begin{align*}
\Lc_D := \tfrac{1}{2} \norm{\Ab_\ell \cdots \Ab_2\Ab_1\Xb - \Yb}_F^2.
\end{align*}
Denote $\Sigmab_{k} := \Yb (\Ab_{(k,1)}\Xb)^\dagger \Ab_{(k,1)}\Xb \Yb^\ttop$ for $k = 0, \ldots, \ell$ with the full singular value decomposition $\Ub_k \Lambdab_k \Ub_k^\ttop$. Suppose that $\Sigmab_k$ has $r(k)$ distinct positive singular values $\sigma_1(k) > \cdots > \sigma_{r(k)}(k) >0$ with multiplicities $m_1(k), \ldots, m_{r(k)}(k)$, respectively, and $\bar{m}(k)$ zero singular values. Our first result provides a full characterization of all critical points of $\Lc_D$, where we denote $\Ab_{(i,j)} := \Ab_i\Ab_{i-1}\cdots\Ab_{j+1}\Ab_j$ for notational convenience\footnote{Here, $\Ab_{(0, 1)}$ should be understood as identity matrix $\Ib$.}. 
\begin{thm}[Characterization of critical points]\label{thm: deep_criti_form}
	All critical points of $\Lc_D$ are necessarily and sufficiently characterized by matrices $\Lb_k\in \RR^{d_k\times d_{k-1}}$, block matrices $\Vb_k \in \RR^{d_l \times d_{k+1}}$ and invertible matrices $\Cb_k \in \RR^{d_{k+1} \times d_{k+1}}$  for $k = 0, \ldots, \ell-2$ such that $\Ab_1,\ldots,\Ab_\ell$ can be individually expressed out recursively via the following two equations:
		\begin{align}
	&	\Ab_{k+1} =  \Cb_{k}^{-1} \Vb_{k}^\ttop \Ub_{k}^\ttop \Yb (\Ab_{(k,1)}\Xb)^\dagger + \Lb_{k+1} - \Cb_{k}^{-1} \Vb_{k}^\ttop \Vb_{k}\Cb_{k} \Lb_{k+1} \Ab_{(k,1)}\Xb(\Ab_{(k,1)}\Xb)^\dagger, \label{eq: A_k_form}\\
&		\Ab_{(\ell,k+2)} = \Ub_{k} \Vb_{k} \Cb_{k}.  \label{eq: A_l_k+1}
		\end{align} 
		Specifically, $\Vb_k = [\mathrm{diag}(\Vb_{1}^{(k)}, \ldots, \Vb_{r(k)}^{(k)}, \overline{\Vb}^{(k)}), \bm{0}^{d_l\times (d_{k+1} - \rank{\Ab_{(\ell,k+2)}})} ]$, where $\Vb_{i}^{(k)}\in \RR^{m_i(k)\times p_i(k)}$, $\overline{\Vb}^{(k)} \in \RR^{\bar{m}(k) \times \bar{p}(k)}$ consist of orthonormal columns with $p_i(k) \le m_i(k)$ for $i=1,\ldots,r(k)$, $\bar{p}(k) \le \bar{m}(k)$ such that $\sum_{i=1}^{r(k)} p_i(k) + \bar{p}(k) = \rank{\Ab_{\ell, k+1}}$, and $\Lb_k, \Vb_k, \Cb_k$ satisfy for $k = 2, \ldots, \ell - 1$
		\begin{align}
		\Ab_{(\ell,k)} = \Ab_{(\ell,k+1)}\Ab_k, \quad (\Ib - \Pc_{\col(\Ab_{(\ell,k+1)})}) \Yb\Xb^\ttop \Ab_{(\ell-1,1)}^\ttop = \zero. \label{eq: sufficiency_deep}
		\end{align}
\end{thm}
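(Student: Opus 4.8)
The plan is to reduce the deep problem to the one-hidden-layer characterization of \Cref{thm: necess_form} by a top-down induction on the layers. First I would record the stationarity equations: writing $\bm{E} := \Ab_{(\ell,1)}\Xb - \Yb$ for the residual, differentiating $\Lc_D$ gives $\nabla_{\Ab_k}\Lc_D = \Ab_{(\ell,k+1)}^\ttop \bm{E}\, \Xb^\ttop \Ab_{(k-1,1)}^\ttop$ for each $k$ (with the conventions $\Ab_{(\ell,\ell+1)} = \Ab_{(0,1)} = \Ib$), so a critical point is exactly a tuple for which all $\ell$ of these vanish. The core reduction is to regard the network, at each level $k$, as a one-hidden-layer network with effective input $\Ab_{(k,1)}\Xb$, hidden weight $\Ab_{k+1}$, and output weight $\Ab_{(\ell,k+2)}$, since $\Lc_D = \tfrac12\norm{\Ab_{(\ell,k+2)}\Ab_{k+1}(\Ab_{(k,1)}\Xb) - \Yb}_F^2$. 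Crucially, the matrix governing \Cref{thm: necess_form} for this effective problem is $\Yb(\Ab_{(k,1)}\Xb)^\dagger\Ab_{(k,1)}\Xb\Yb^\ttop = \Sigmab_k$, so the singular data $\Ub_k,\Lambdab_k$ and multiplicities $m_i(k),\bar m(k)$ are precisely those in the statement.

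At the top level $k=\ell-2$ the two stationarity equations of this effective two-layer problem coincide \emph{exactly} with $\nabla_{\Ab_{\ell-1}}\Lc_D = 0$ (gradient in the hidden weight) and $\nabla_{\Ab_\ell}\Lc_D = 0$ (gradient in the output weight). Hence \Cref{thm: necess_form} applies verbatim and produces \cref{eq: A_k_form,eq: A_l_k+1} together with \cref{eq: sufficiency_deep} for $k=\ell-2$, giving in particular the block form $\Ab_\ell = \Ab_{(\ell,\ell)} = \Ub_{\ell-2}\Vb_{\ell-2}\Cb_{\ell-2}$. This is the base case of the induction.

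For the inductive step I would assume the upper product $\Ab_{(\ell,k+2)}$ has already been shown to be of the block form $\Ub_k\Vb_k\Cb_k$ in \cref{eq: A_l_k+1}. Substituting this into the single available equation $\nabla_{\Ab_{k+1}}\Lc_D = \Ab_{(\ell,k+2)}^\ttop\bm{E}\,\Xb^\ttop\Ab_{(k,1)}^\ttop = 0$ turns it into a linear matrix equation in $\Ab_{k+1}$; solving it as a particular solution plus the homogeneous solution reproduces \cref{eq: A_k_form}, with the freedom parametrized by $\Lb_{k+1}$. Forming $\Ab_{(\ell,k+1)} = \Ab_{(\ell,k+2)}\Ab_{k+1}$ and re-expressing it against the SVD of $\Sigmab_{k-1}$ then propagates the block form down one level, and the orthogonality left over for consistency is exactly \cref{eq: sufficiency_deep}. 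For sufficiency I would reverse these steps: given any $\Lb_k,\Vb_k,\Cb_k$ obeying the stated block pattern and \cref{eq: sufficiency_deep}, substitute \cref{eq: A_k_form,eq: A_l_k+1} back into every $\nabla_{\Ab_k}\Lc_D$ and check it vanishes, where \cref{eq: sufficiency_deep} is precisely what annihilates the leftover cross term.

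The main obstacle is that, unlike the top level, at a lower level $k<\ell-2$ the effective two-layer problem is \emph{not} a genuine critical point. Its stationarity equation in the output weight would read $\Gb := \bm{E}\,\Xb^\ttop\Ab_{(k+1,1)}^\ttop = \zero$, whereas the deep equations only furnish $\Ab_{(\ell,k+3)}^\ttop\Gb = \zero$; left-multiplying by $\Ab_{k+2}^\ttop$ yields merely $\Ab_{(\ell,k+2)}^\ttop\Gb = \zero$, so only the component of $\Gb$ inside $\col(\Ab_{(\ell,k+2)})$ is annihilated, not $\Gb$ itself. Thus \Cref{thm: necess_form} cannot be invoked verbatim below the top, and the block structure of $\Ab_{(\ell,k+2)}$ must be carried through the induction rather than re-derived at each level. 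Executing this cleanly requires tracking how $\Sigmab_k = \Yb\,\Pc_{\row(\Ab_{(k,1)}\Xb)}\Yb^\ttop$ — and hence its singular subspaces $\Ub_k$ and the admissible block patterns of $\Vb_k$ — evolve with $k$, matching them across consecutive levels, and verifying that the accumulated constraints collapse exactly to \cref{eq: sufficiency_deep}. This bookkeeping, which generalizes the fixed-dimension argument of \cite{Kawaguchi_2016} to arbitrary network widths and arbitrary data, is the delicate part of the proof.
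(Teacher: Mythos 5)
Your setup matches the paper's: the same gradient formula, the same identification of each level $k$ with an effective two-layer problem whose input is $\Ab_{(k,1)}\Xb$, output weight is $\Ab_{(\ell,k+2)}$, and data matrix is $\Sigmab_k$, and the same solution of the linear system $\nabla_{\Ab_{k+1}}\Lc_D=\zero$ to obtain \cref{eq: A_k_form}. But the step you defer as ``the delicate part'' --- establishing the block form \cref{eq: A_l_k+1} for $k<\ell-2$ --- is the crux of the theorem, and the mechanism you propose for it would not work. Knowing that $\Ab_{(\ell,k+2)}$ is block-structured with respect to $\Ub_k$ says nothing about how $\Ab_{(\ell,k+1)}=\Ab_{(\ell,k+2)}\Ab_{k+1}$ sits relative to $\Ub_{k-1}$: the bases $\Ub_k$ and $\Ub_{k-1}$ diagonalize the distinct matrices $\Sigmab_k$ and $\Sigmab_{k-1}$ and are unrelated in general, so the block pattern cannot be ``propagated down'' through the product. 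What forces the block structure at level $k$ is the commutation identity $\Pc_{\col(\Ab_{(\ell,k+2)})}\Sigmab_{k}=\Sigmab_{k}\Pc_{\col(\Ab_{(\ell,k+2)})}$, and your proposal never derives it below the top level.

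The missing idea is that your ``main obstacle'' is not actually an obstacle: the form-derivation half of \Cref{thm: necess_form} never uses the full output-layer stationarity $\Gb=\zero$, only the weaker condition $\Gb\Ab_{(\ell,k+2)}^\ttop=\zero$. Writing $\bm{E}:=\Ab_{(\ell,1)}\Xb-\Yb$ as you do, this weak condition reads $\bm{E}\Xb^\ttop\Ab_{(k+1,1)}^\ttop\Ab_{(\ell,k+2)}^\ttop=\bm{E}\Xb^\ttop\Ab_{(\ell,1)}^\ttop=\zero$, which is the \emph{same} identity at every level and follows once and for all from $(\nabla_{\Ab_\ell}\Lc_D)\Ab_\ell^\ttop=\zero$. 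Combining it with $\Ab_{(\ell,1)}\Xb=\Pc_{\col(\Ab_{(\ell,k+2)})}\Yb(\Ab_{(k,1)}\Xb)^\dagger\Ab_{(k,1)}\Xb$, which you already have from solving $\nabla_{\Ab_{k+1}}\Lc_D=\zero$, yields $\Pc_{\col(\Ab_{(\ell,k+2)})}\Sigmab_k\Pc_{\col(\Ab_{(\ell,k+2)})}=\Sigmab_k\Pc_{\col(\Ab_{(\ell,k+2)})}$, hence commutation by symmetry, and then the shallow-network block-diagonal argument applies verbatim at every level simultaneously --- no induction and no cross-level bookkeeping is needed. The full condition $\nabla_{\Ab_\ell}\Lc_D=\zero$ (as opposed to its weak form) then needs to be imposed only once, at the top layer, and that is exactly the second condition in \cref{eq: sufficiency_deep}; the first condition there is the consistency requirement that the per-level forms factor into a single tuple $(\Ab_1,\ldots,\Ab_\ell)$, which your write-up also leaves implicit.
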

Note that the forms of the individual parameters $\Ab_1,\ldots,\Ab_\ell$ can be obtained as follows by recursively applying \cref{eq: A_k_form,eq: A_l_k+1}. First, \cref{eq: A_l_k+1} with $k= 0$ yields the form of $\Ab_{(\ell,2)}$. Then, \cref{eq: A_k_form} with $k=0$ and the form of $\Ab_{(\ell,2)}$ yield the form of $\Ab_1$. Next, \cref{eq: A_l_k+1} with $k= 1$ yields the form of $\Ab_{(\ell,3)}$, and then, \cref{eq: A_k_form} with $k=1$ and the forms of $\Ab_{(\ell,3)}, \Ab_1$ further yield the form of $\Ab_2$. Inductively, one obtains the expressions of all individual parameter matrices. Furthermore, the first condition in \cref{eq: sufficiency_deep} is a consistency condition that guarantees that the analytical form for the entire product of parameter matrices factorizes into the forms of individual parameter matrices. 

Similarly to shallow linear networks, the parameters $\{p_i(0)\}_{i=1}^{r(0)}, \bar{p}(0)$ determine the value of the loss function at the critical points and further specify the analytical form for the global minimizers, as we present in the following two propositions.
\begin{proposition}\label{prop: deep_loss_value}
	Any critical point $(\Ab_1,\ldots, \Ab_\ell)$ of $\Lc_D$ satisfies 
	\begin{align*}
		\Lc_D (\Ab_1,\ldots, \Ab_\ell) = \tfrac{1}{2}\big[\mathrm{Tr}(\Yb\Yb^\ttop) - \textstyle\sum_{i=1}^{r(0)} p_i(0) \sigma_i(0) \big].
	\end{align*}
\end{proposition}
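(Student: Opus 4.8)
The plan is to reduce the computation to the shallow case already settled in \Cref{lemma: loss_value}, viewing the accumulated product $\Ab_{(\ell,2)}$ as the ``second layer'' of an effective one-hidden-layer network whose first layer is $\Ab_1$. First I would record the preliminary observation that, since $\Ab_{(0,1)} = \Ib$ by convention, we have $\Sigmab_0 = \Yb\Xb^\dagger\Xb\Yb^\ttop = \Sigmab$, so that $\Ub_0$, the distinct singular values $\sigma_i(0)$, their multiplicities $m_i(0)$, and $r(0)$ all coincide with the corresponding quantities of the shallow problem, and the block parameters $p_i(0)$ of $\Vb_0$ play exactly the role of the shallow parameters $p_i$. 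Because $\Lc_D$ depends on the weights only through the end-to-end product $\Ab_{(\ell,1)} = \Ab_{(\ell,2)}\Ab_1$, it suffices to show that $\Ab_{(\ell,1)}\Xb$ coincides with the product $\Ab_2\Ab_1\Xb$ evaluated at an appropriate shallow critical point.

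The key step is a direct computation of $\Ab_{(\ell,1)}\Xb$ from the analytical forms at $k=0$. Specializing \cref{eq: A_l_k+1} gives $\Ab_{(\ell,2)} = \Ub_0\Vb_0\Cb_0$, and specializing \cref{eq: A_k_form} (using $\Ab_{(0,1)} = \Ib$) gives $\Ab_1 = \Cb_0^{-1}\Vb_0^\ttop\Ub_0^\ttop\Yb\Xb^\dagger + \Lb_1 - \Cb_0^{-1}\Vb_0^\ttop\Vb_0\Cb_0\Lb_1\Xb\Xb^\dagger$, which are precisely the shallow forms \cref{eq: form_A1,eq: form_A2} under the identifications $\Ub = \Ub_0$, $\Vb = \Vb_0$, $\Cb = \Cb_0$. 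Multiplying these out, the factors $\Cb_0\Cb_0^{-1}$ cancel and the block-orthonormality identity $\Vb_0\Vb_0^\ttop\Vb_0 = \Vb_0$ collapses the two $\Lb_1$-terms into $\Ub_0\Vb_0\Cb_0\Lb_1(\Ib - \Xb\Xb^\dagger)$; right-multiplying by $\Xb$ and invoking the Moore--Penrose identity $\Xb\Xb^\dagger\Xb = \Xb$ then annihilates this $\Lb_1$-dependent remainder entirely, since $(\Ib - \Xb\Xb^\dagger)\Xb = \zero$. This leaves the clean expression $\Ab_{(\ell,1)}\Xb = \Ub_0\Vb_0\Vb_0^\ttop\Ub_0^\ttop\Yb\Xb^\dagger\Xb = \Pc_{\col(\Ub_0\Vb_0)}\Yb\Xb^\dagger\Xb$.

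To conclude, I would observe that $\Pc_{\col(\Ub_0\Vb_0)}\Yb\Xb^\dagger\Xb$ is exactly the end-to-end product $\Ab_2\Ab_1\Xb$ produced by the shallow critical point obtained from \Cref{thm: necess_form} by taking $\Lb_1 = \zero$ with the same $\Vb_0$ and $\Cb_0$, a choice that trivially satisfies \cref{eq: sufficiency} and is therefore a genuine critical point of the shallow loss with block parameters $p_i = p_i(0)$. Since $\Lc_D(\Ab_1,\ldots,\Ab_\ell) = \tfrac{1}{2}\norm{\Ab_{(\ell,1)}\Xb - \Yb}_F^2$ sees only this product, it equals the shallow loss evaluated at that critical point, and \Cref{lemma: loss_value} yields $\Lc_D = \tfrac{1}{2}[\Tr(\Yb\Yb^\ttop) - \sum_{i=1}^{r(0)}p_i(0)\sigma_i(0)]$. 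The main obstacle is precisely the middle computation: one must verify carefully that the $\Lb_1$ degrees of freedom drop out of the product once it is multiplied by $\Xb$, for which the orthonormal-column structure of the blocks of $\Vb_0$ and the pseudoinverse identity are both essential. The reduction itself is then immediate, because the loss depends on the deep weights only through $\Ab_{(\ell,1)}\Xb$, and the deep sufficiency condition \cref{eq: sufficiency_deep} is not even needed for this value computation.
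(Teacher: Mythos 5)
Your proof is correct, and it rests on the same key identity as the paper's, namely $\Ab_{(\ell,1)}\Xb = \Pc_{\col(\Ub_0\Vb_0)}\Yb\Xb^\dagger\Xb$; the difference is only in how that identity is obtained and how the final trace is evaluated. The paper gets the identity from the stationarity consequence \cref{eq: product_deep} with $k=1$ (which needs only the single gradient equation $\nabla_{\Ab_1}\Lc_D=\zero$ together with $\Ab_{(\ell,2)}=\Ub_0\Vb_0\Cb_0$), and then re-runs the trace computation of \Cref{lemma: loss_value} verbatim in deep notation, ending with $\Tr(\Vb_0^\ttop\Lambdab_0\Vb_0)=\sum_i p_i(0)\sigma_i(0)$. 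You instead multiply out the analytical forms \cref{eq: A_k_form,eq: A_l_k+1} at $k=0$ — so you lean on the full strength of \Cref{thm: deep_criti_form} rather than on one gradient equation — and your cancellation of the $\Lb_1$-terms via $\Vb_0\Vb_0^\ttop\Vb_0=\Vb_0$ and $\Xb\Xb^\dagger\Xb=\Xb$ is exactly right. Your closing move, matching $\Ab_{(\ell,1)}\Xb$ with the end-to-end product of the shallow critical point $(\Cb_0^{-1}\Vb_0^\ttop\Ub_0^\ttop\Yb\Xb^\dagger,\ \Ub_0\Vb_0\Cb_0)$ (valid since $\Lb_1=\zero$ satisfies \cref{eq: sufficiency} and $\Sigmab_0=\Sigmab$ because $\Ab_{(0,1)}=\Ib$), lets you cite \Cref{lemma: loss_value} as a black box instead of repeating its computation; this buys a slightly shorter argument and makes the ``deep reduces to shallow through the product matrix'' structure explicit, at the cost of invoking more of \Cref{thm: deep_criti_form} than is strictly necessary. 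You are also right that the consistency conditions \cref{eq: sufficiency_deep} play no role in the value computation.
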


\begin{proposition}[Characterization of global minimizers]\label{prop: deep_global_min}
	A critical point $(\Ab_1,\ldots, \Ab_\ell)$ of $\Lc_D$ is a global minimizer if and only if it falls into the following two cases.
	\begin{enumerate}[leftmargin=*,topsep=0pt,noitemsep]
		\item Case 1: $\min \{d_\ell,\ldots, d_1\} \le \sum_{i=1}^{r(0)} m_i(0)$, $\Ab_{(\ell,2)}$ achieves the maximal rank, and $p_1(0) = m_1(0), \ldots, p_{k-1}(0) = m_{k-1}(0), p_k(0) = \rank{\Ab_{(\ell,2)}} - \sum_{i=1}^{k-1} m_i(0) \le m_k(0)$ for some $k\le r(0)$;
		\item Case 2: $\min  \{d_\ell,\ldots, d_1\} > \sum_{i=1}^{r(0)} m_i(0)$, $p_i(0) = m_i(0)$ for all $i = 1,\ldots,r(0)$ and $\bar{p}(0) \ge 0$. In particular, $\Ab_{(\ell,2)}$ can be non-full rank with $\rank{\Ab_{(\ell,2)}} =  \sum_{i=1}^{r(0)} m_i(0)$.
	\end{enumerate}
The analytical form of any global minimizer can be obtained from \Cref{thm: deep_criti_form} with further specification to the above two cases. 
\end{proposition}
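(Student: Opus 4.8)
The plan is to reduce the characterization of global minimizers to a combinatorial optimization over the block-pattern parameters $\{p_i(0)\}_{i=1}^{r(0)}$, mirroring the shallow-network argument behind \Cref{prop: global_min}. By \Cref{prop: deep_loss_value}, every critical point attains loss value $\tfrac{1}{2}[\Tr(\Yb\Yb^\ttop) - \sum_{i=1}^{r(0)} p_i(0)\sigma_i(0)]$, which depends on the critical point only through the integers $p_i(0)$. Since $\Tr(\Yb\Yb^\ttop)$ is a fixed constant and $\Lc_D$ is a nonnegative polynomial whose infimum is attained at a critical point (a global minimizer of a differentiable loss is critical), identifying the global minimizers amounts to maximizing $\sum_{i=1}^{r(0)} p_i(0)\sigma_i(0)$ over all feasible configurations. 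From \Cref{thm: deep_criti_form} the feasibility constraints are $0 \le p_i(0) \le m_i(0)$, $\bar{p}(0) \ge 0$, and $\sum_{i=1}^{r(0)} p_i(0) + \bar{p}(0) = \rank{\Ab_{(\ell,2)}}$.

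Next I would pin down the admissible range of $\rank{\Ab_{(\ell,2)}}$. Because $\Ab_{(\ell,2)} = \Ab_\ell \cdots \Ab_2$ is a product of matrices threading through the inner widths $d_{\ell-1}, \ldots, d_1$, its rank is bottlenecked by the smallest layer, giving $\rank{\Ab_{(\ell,2)}} \le \min\{d_\ell, \ldots, d_1\} =: R$. Combined with $\bar{p}(0) \ge 0$ this yields $\sum_{i=1}^{r(0)} p_i(0) \le R$, while the per-block caps give $\sum_{i=1}^{r(0)} p_i(0) \le \sum_{i=1}^{r(0)} m_i(0) =: M$. The objective $\sum_i p_i(0)\sigma_i(0)$ is a nonnegatively weighted count with strictly decreasing weights $\sigma_1(0) > \cdots > \sigma_{r(0)}(0) > 0$, so a standard exchange (rearrangement) argument shows the maximum is achieved greedily: saturate the blocks of the largest singular values first until the total budget $\min\{R, M\}$ is exhausted.

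Reading off the greedy solution then produces the two cases. When $R \le M$ the budget is spent entirely on positive singular values in decreasing order, forcing $p_1(0) = m_1(0), \ldots, p_{k-1}(0) = m_{k-1}(0)$ and $p_k(0) = R - \sum_{i=1}^{k-1} m_i(0)$ for the appropriate $k$, with $\Ab_{(\ell,2)}$ at maximal rank $R$ and $\bar{p}(0)=0$; this is Case 1. When $R > M$ every positive singular value can be captured, so $p_i(0) = m_i(0)$ for all $i$, the residual budget is absorbed into $\bar{p}(0) \ge 0$ (attached to zero singular values and hence irrelevant to the loss), and $\rank{\Ab_{(\ell,2)}}$ may be as small as $M$; this is Case 2. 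For the converse direction I would invoke \Cref{thm: deep_criti_form} to exhibit, for each such configuration, an explicit critical point realizing it — choosing the $\Lb_k$ to vanish so that the consistency conditions in \cref{eq: sufficiency_deep} hold automatically and the block pattern $\Vb_k$ encodes the prescribed $p_i(0)$ — which by \Cref{prop: deep_loss_value} attains the minimal loss value and is therefore a global minimizer.

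The main obstacle is the realizability claim underlying the rank bound: unlike the shallow case, where $\Ab_2$ is a single unconstrained factor, here the target rank of $\Ab_{(\ell,2)}$ must be achieved by a product threading through all intermediate widths, and one must verify that the recursive construction of \Cref{thm: deep_criti_form} can simultaneously (i) produce factors whose product attains the desired rank up to the bottleneck $R$ and (ii) satisfy the consistency constraints \cref{eq: sufficiency_deep} linking consecutive layers. Confirming that the greedy optimum is actually attained by a legitimate critical point — rather than being merely an upper bound on the attainable objective — is where the deep-network analysis genuinely departs from the shallow one.
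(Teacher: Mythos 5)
Your overall strategy---use \Cref{prop: deep_loss_value} to reduce the question to maximizing $\sum_{i=1}^{r(0)} p_i(0)\sigma_i(0)$ over feasible block patterns, then solve that combinatorial problem greedily subject to the budget $\rank{\Ab_{(\ell,2)}} \le \min\{d_\ell,\ldots,d_1\}$---is the same one the paper takes; the paper merely outsources the greedy step to \Cref{prop: global_min} by treating the product $\Ab_{(\ell,2)}$ as a single shallow-network factor $\Bb_2$ of the bottleneck dimensions.

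The obstacle you flag at the end, however, is a genuine gap in your writeup as it stands, and it is exactly where the paper's proof does its (tersely stated) work. You need realizability: every admissible block pattern with $\rank{\Ab_{(\ell,2)}}$ up to the bottleneck $R=\min\{d_\ell,\ldots,d_1\}$ must be attained by an actual critical point of $\Lc_D$, else your greedy value is only an upper bound and the ``only if'' direction of the characterization is unproved. The resolution does not require wrestling with the recursion of \Cref{thm: deep_criti_form} or the consistency conditions \cref{eq: sufficiency_deep} at all. The key observation is that $\Ab_{(\ell,2)}$ sweeps out exactly the matrices $\Bb_2\in\RR^{d_\ell\times d_1}$ with $\rank{\Bb_2}\le R$, since any such $\Bb_2$ admits a rank factorization threading through the intermediate widths $d_{\ell-1},\ldots,d_2$. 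Moreover, if $(\Ab_1,\Bb_2)$ is a critical point of the induced shallow loss $\tfrac12\norm{\Bb_2\Ab_1\Xb-\Yb}_F^2$ and $\Bb_2=\Ab_\ell\cdots\Ab_2$ is any such factorization, then $(\Ab_1,\ldots,\Ab_\ell)$ is automatically a critical point of $\Lc_D$: indeed $\nabla_{\Ab_1}\Lc_D=\Ab_{(\ell,2)}^\ttop(\Bb_2\Ab_1\Xb-\Yb)\Xb^\ttop=\Bb_2^\ttop(\Bb_2\Ab_1\Xb-\Yb)\Xb^\ttop=\zero$, and for $k\ge 2$ the gradient $\nabla_{\Ab_k}\Lc_D$ contains the vanishing factor $(\Bb_2\Ab_1\Xb-\Yb)\Xb^\ttop\Ab_1^\ttop=\zero$. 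This lifting converts deep realizability into the shallow realizability already established in the proof of \Cref{prop: global_min} (take $\Lb_1=\zero$ there), and the lifted point satisfies all conditions of \Cref{thm: deep_criti_form} by construction. With that insertion your argument closes; the only remaining bookkeeping is to identify the deep block parameters $p_i(0)$ with the shallow ones, which the paper does by equating the two loss-value formulas and using $\sigma_1(0)>\cdots>\sigma_{r(0)}(0)$.
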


We next exploit the analytical forms of the critical points to further understand the landscape of the loss function $\Lc_D$. It has been shown in \cite{Kawaguchi_2016} that every local minimum of $\Lc_D$ is also a global minimum, under certain conditions on the parameter dimensions and the invertibility of the data matrices. Here, our characterization of the analytical forms for the critical points allow us to understand such a result from an alternative viewpoint. The proofs for certain cases (that we discuss below) are simpler and more intuitive, and no assumption is made on the data matrices and dimensions of the network.


Similarly to shallow linear networks, we want to understand the local landscape around the critical points.  However, due to the effect of depth, the critical points of $\Lc_D$ are more complicated than those of $\Lc$. Among them, we identify the following subsets of the non-global-minimum critical points $(\Ab_1, \cdots, \Ab_\ell)$ of $\Lc_D$.
\begin{itemize}[leftmargin=*,topsep=0pt,noitemsep]
\item (Deep-non-optimal order): There exist $0 \le k \le \ell - 2$ such that the matrix $\Vb_{k}$ specified in \Cref{thm: deep_criti_form} satisfies that there exist $1\le i < j \le r(k)$ such that $p_i(k) < m_i(k)$ and $p_j(k) >0$.
\item  (Deep-optimal order): $(\Ab_{\ell}, \Ab_{\ell-1})$ is not a global minimizer of $\Lc_D$ with $\Ab_{(\ell-2, 1)}$ being fixed,  $\rank{\Ab_\ell} < \min \{d_\ell, d_{\ell-1}\}$, and the matrix $\Vb_{\ell-2}$ specified in \Cref{thm: deep_criti_form} satisfies that $p_1(l-2) = m_1(l-2), \ldots, p_{k-1}(l-2) = m_{k-1}(l-2), p_k(l-2) = \rank{\Ab_l} - \sum_{i=1}^{k-1} m_i(l-2) \le m_k(l-2)$ for some $1\le k\le r(l-2)$.
\end{itemize}
The following result summarizes the landscape of $\Lc_D$ around the above two types of critical points.
\begin{thm}[Landscape of $\Lc_D$]\label{thm: deep_critic_perturb}
	The loss function $\Lc_D$ has the following landscape properties.
	\begin{enumerate}[leftmargin=*,topsep=0pt,noitemsep]
		\item  A deep-non-optimal-order critical point $(\Ab_1,\ldots,\Ab_\ell)$ has a perturbation $(\Ab_1,\ldots, \widetilde{\Ab}_{k+1},\ldots,\widetilde{\Ab}_\ell)$ with $\rank{\widetilde{\Ab}_\ell} = \rank{\Ab_\ell}$, which achieves a lower function value.
		\item  A deep-optimal-order critical point $(\Ab_1,\ldots,\Ab_\ell)$ has a perturbation $(\Ab_1,\ldots,\widetilde{\Ab}_{\ell-1}, \widetilde{\Ab}_{\ell})$ with $\rank{\widetilde{\Ab}_\ell} = \rank{\Ab_\ell} + 1$, which achieves a lower function value.
		\item  Any point in $\Xc_D:=\{(\Ab_1,\ldots,\Ab_\ell):~\Ab_{(\ell, 1)}\Xb \ne \zero \}$ has a perturbation $(\Ab_1, \ldots, \widetilde{\Ab}_\ell)$ that achieves a higher function value.
	\end{enumerate}
	Consequently, 1) every local minimum of $\Lc_D$ is also a global minimum for the above two types of critical points; and 2) every critical point of these two types in $\Xc_D$ is a saddle point.
\end{thm}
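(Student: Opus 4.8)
The plan is to reduce each of the three items to the corresponding item of the shallow-network result \Cref{thm: critic_perturb} by grouping the $\ell$ layers into an effective two-layer network and matching analytical forms. The crucial observation I would exploit is that, for each index $0 \le k \le \ell-2$, if one freezes $\Ab_1,\ldots,\Ab_k$ and regards $\Ab_{(k,1)}\Xb$ as the effective input, $\Ab_{k+1}$ as the first weight matrix, and the product $\Ab_{(\ell,k+2)}$ as the second weight matrix, then the deep forms \cref{eq: A_k_form,eq: A_l_k+1} coincide verbatim with the shallow forms \cref{eq: form_A1,eq: form_A2}, with $\Sigmab_k$ playing the role of $\Sigmab$ and $(\Vb_k,\Cb_k,\Ub_k,\Lb_{k+1})$ playing the roles of $(\Vb,\Cb,\Ub,\Lb_1)$. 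Consequently the singular-value bookkeeping $\{p_i(k),m_i(k)\}$ of \Cref{thm: deep_criti_form} is exactly the shallow bookkeeping for this grouped network, and the two deep categories translate directly into the shallow optimal/non-optimal categories.

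For item 1, let $k$ be the index witnessing the deep-non-optimal order, so $\Vb_k$ has $p_i(k)<m_i(k)$ and $p_j(k)>0$ for some $i<j$. This is precisely the non-optimal-order condition for the grouped pair $(\Ab_{k+1},\Ab_{(\ell,k+2)})$, so item 1 of \Cref{thm: critic_perturb} supplies a perturbation rotating the captured $\sigma_j(k)$-direction toward the skipped $\sigma_i(k)$-direction, preserving $\rank{\Ab_{(\ell,k+2)}}$ and strictly lowering $\Lc_D$. For item 2, the deep-optimal-order hypothesis states exactly that the last two layers $(\Ab_{\ell-1},\Ab_\ell)$, with $\Ab_{(\ell-2,1)}\Xb$ as effective input, form an optimal-order critical point that is not a global minimizer; applying item 2 of \Cref{thm: critic_perturb} to this genuine two-matrix pair yields a perturbation with $\rank{\widetilde{\Ab}_\ell}=\rank{\Ab_\ell}+1$ that decreases $\Lc_D$, and since the grouped second layer is the single matrix $\Ab_\ell$ no further work is needed. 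Item 3 follows the same way: taking $\Ab_{(\ell-1,1)}\Xb$ as the effective input to the last layer and perturbing only $\Ab_\ell$, the hypothesis $\Ab_{(\ell,1)}\Xb\ne\zero$ gives $\Ab_\ell(\Ab_{(\ell-1,1)}\Xb)\ne\zero$, so item 3 of \Cref{thm: critic_perturb} produces an ascent direction in $\Ab_\ell$.

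The main obstacle will be the factor-realization step in item 1: \Cref{thm: critic_perturb} perturbs the \emph{product} $\Ab_{(\ell,k+2)}$, whereas the statement requires a perturbation of the individual matrices $\Ab_{k+1},\ldots,\Ab_\ell$ with $\rank{\widetilde{\Ab}_\ell}=\rank{\Ab_\ell}$. One cannot absorb the change into $\Ab_{k+1}$ alone, since $\Lc_D$ is convex in $\Ab_{k+1}$ with the remaining factors fixed and is already minimized there at a critical point; nor can one generally push it onto the innermost factor $\Ab_{k+2}$, because the target $\sigma_i(k)$-direction need not lie in $\col(\Ab_{(\ell,k+3)})\supseteq\col(\Ab_{(\ell,k+2)})$. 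My resolution is to realize the rotation on the outermost factor $\Ab_\ell$: because the shallow perturbation swaps one captured singular direction for another rather than adding one, it preserves $\rank{\Ab_{(\ell,k+2)}}$, and the corresponding rotation of $\col(\Ab_\ell)$ is rank-preserving, yielding $\rank{\widetilde{\Ab}_\ell}=\rank{\Ab_\ell}$. The delicate part is to choose the perturbations of the intermediate factors consistently so that their product reproduces exactly the shallow descent perturbation of $\Ab_{(\ell,k+2)}$, and this is where the bulk of the matrix manipulation lies.

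Finally, the consequences follow formally, exactly as \Cref{thm: landscape} was deduced from \Cref{thm: critic_perturb} in the shallow case. Items 1 and 2 exhibit a strict descent direction at every deep-non-optimal-order and every deep-optimal-order critical point, so no such point is a local minimizer; combined with \Cref{prop: deep_global_min}, which shows every non-global-minimum critical point of these types fails the global-minimum conditions, this gives that every local minimum among these two types is a global minimum. Item 3 exhibits a strict ascent direction at every point of $\Xc_D$, so any critical point of either type lying in $\Xc_D$ possesses both a descent and an ascent direction and is therefore a saddle point.
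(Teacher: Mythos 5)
Your proposal is correct and follows essentially the same route as the paper: reduce to the shallow result by grouping layers, realize the non-optimal-order rotation by left-multiplying $\Ab_\ell$ with the near-identity (hence rank-preserving) map $\widetilde{\Mb}_k(\Ub_k\Sb_k)^\ttop$ while regenerating $\Ab_{k+1}$ from the analytical form, and apply the shallow optimal-order and ascent perturbations directly to $(\Ab_{\ell-1},\Ab_\ell)$ and $\Ab_\ell$. The only point you over-complicate is the ``delicate'' consistency of the intermediate factors: $\Ab_{k+2},\ldots,\Ab_{\ell-1}$ need no adjustment at all, since the left rotation of $\Ab_\ell$ alone already reproduces the rotated product $\Ab_{(\ell,k+2)}$.
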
 

\Cref{thm: deep_critic_perturb} implies that the landscape of $\Lc_D$ for deep linear networks is similar to that of $\Lc$ for shallow linear networks, i.e., the pattern of the parameters $\{p_i(k)\}_{i=1}^{r(k)}$ implies different descent directions of the function value around the critical points. Our approach does not handle the remaining set of non-global minimizers, i.e., there exists $q\le \ell - 1$ such that $(\Ab_\ell, \ldots, \Ab_q)$ is a global minimum point of $\Lc_D$ with $\Ab_{(q-1, 1)}$ being fixed, and $\Ab_{(\ell, q)}$ is of optimal order. It is unclear how to perturb the intermediate weight parameters using their analytical forms for deep networks , and we leave this as an open problem for the future work.

\section{Nonlinear Neural Networks with One Hidden Layer}\label{sec: nonlinear}
In this section, we study nonlinear neural networks with one hidden layer. 
In particular, we consider nonlinear networks with ReLU activation function $\sigma: \RR \to \RR$ that is defined as $\sigma (x):= \max \{x, 0 \}$. 
Our study focuses on the set of differentiable critical points.
The weight parameters between the layers are denoted by $\Ab_2 \in \RR^{d_2 \times d_1}, \Ab_1 \in \RR^{d_1 \times d_0}$, respectively, and the input and output data are denoted by $\Xb \in \RR^{d_0 \times m}, \Yb \in \RR^{d_2 \times m}$, respectively. We are interested in the square loss function which is given by 
\begin{align}
\Lc_N := \tfrac{1}{2} \norm{\Ab_2 \sigma(\Ab_1 \Xb) - \Yb}_F^2,
\end{align} 
where $\sigma$ acts on $\Ab_1 \Xb$ entrywise. Existing studies on nonlinear networks characterized the sufficient conditions for critical points being global minimum \cite{Gori_1992,Nguyen_2017}, established the equivalence between local minimum and global minimum under the condition that $d_0= m$ \cite{Yu_1995}, and provided understanding of geometric properties of the critical points \cite{Tian_2017}. In comparison, our results below provide a full characterization of the critical points of $\Lc_N$ with $d_1 = 1$ and critical points of $\Lc_N$ over certain parameter space with $d_1>1$, and show that local minimum of $\Lc_N$ that is not global minimum can exist. 

Since the activation function $\sigma$ is piecewise linear, the entire parameter space can be partitioned into disjoint cones. In particular, we consider the set of cones $\Kc_{I \times J}$ where $I \subset \{1,\ldots,d_1 \}, J \subset \{1,\ldots,m \}$ that satisfy
\begin{align}
\Kc_{I \times J} := \{(\Ab_2, \Ab_1) : ~(\Ab_1)_{I, \bm{:}} \Xb_{\bm{:},J} \ge 0, \text{other entries of}~ \Ab_1\Xb < 0\}, \label{eq: cone}
\end{align}
where ``$\ge$'' and ``$<$'' represent entrywise comparisons.
Within $\Kc_{I \times J}$, the term $\sigma(\Ab_1 \Xb)$ activates only the entries $\sigma(\Ab_1 \Xb)_{I:J}$, and the corresponding loss function $\Lc_N$ is equivalent to
\begin{align}
\forall	(\Ab_2, \Ab_1) \in \Kc_{I \times J}, \quad \Lc_N := \tfrac{1}{2} \norm{(\Ab_2)_{\bm{:},I} (\Ab_1)_{I,\bm{:}} \Xb_{\bm{:},J} - \Yb_{\bm{:},J}}_F^2 + \tfrac{1}{2}\norm{\Yb_{\bm{:},J^c}}_F^2. \label{eq: reduce_linear}
\end{align}
Hence, within $\Kc_{I \times J}$, $\Lc_N$ reduces to the loss of a shallow linear network with parameters $((\Ab_2)_{\bm{:},I},  (\Ab_1)_{I,\bm{:}})$ and input \& output data pair $(\Xb_{\bm{:},J}, \Yb_{\bm{:},J})$. Note that our results on shallow linear networks in \Cref{sec: linear_shallow} are applicable to all parameter dimensions and data matrices. Thus, \Cref{thm: necess_form} fully characterizes the forms of critical points of $\Lc_N$ in $\Kc_{I \times J}$. Moreover, the existence of such critical points can be analytically examined by substituting their forms into \cref{eq: cone}. In summary, we obtain the following result, where we denote $\Sigmab_{J} := \Yb_{\bm{:},J} \Xb_{\bm{:},J}^\dagger\Xb_{\bm{:},J}\Yb_{\bm{:},J}^\ttop$ with the full singular value decomposition $\Ub_{J}\Lambdab_{J} \Ub_{J}^\ttop$, and suppose that $\Sigmab_{J}$ has $r(J)$ distinct positive singular values $\sigma_1(J) > \cdots > \sigma_{r(J)}(J)$ with multiplicities $m_1, \ldots, m_{r(J)}$, respectively, and $\bar{m}(J)$ zero singular values. 
\begin{proposition}[Characterization of critical points]\label{thm: nonlinear_multiunit_landscape}
	All critical points of $\Lc_N$ in $\Kc_{I \times J}$ for any $I \subset \{1,\ldots,d_1 \}, J \subset \{1,\ldots,m \}$ are necessarily and sufficiently characterized by an $\Lb_1 \in \RR^{|I| \times d_0 }$, a block matrix $\Vb \in \RR^{d_2 \times |I|}$ and an invertible matrix $\Cb \in \RR^{|I| \times |I|}$ such that
		\begin{align}
		(\Ab_1)_{I,\bm{:}} &= \Cb^{-1} \Vb^\ttop \Ub_{J}^\ttop \Yb_{\bm{:},J} \Xb_{\bm{:},J}^\dagger + \Lb_1 -  \Cb^{-1} \Vb^\ttop \Vb \Cb \Lb_1 \Xb_{\bm{:},J}\Xb_{\bm{:},J}^\dagger, \label{eq: form_A1_nonlinear} \\
		(\Ab_2)_{\bm{:}, I} &= \Ub_{J} \Vb \Cb . \label{eq: form_A2_nonlinear}
		\end{align}
		Specifically, $\Vb = [\mathrm{diag}(\Vb_{1}, \ldots, \Vb_{r(J)}, \overline{\Vb}), \bm{0}^{d_2\times (|I| - \rank{(\Ab_2)_{\bm{:},I}})} ]$, where $\Vb_{i} \in \RR^{m_i \times p_i}$, $\overline{\Vb} \in \RR^{\bar{m} \times \bar{p}}$ consist of orthonormal columns with $ p_i \le m_i$ for $i=1,\ldots,r(J)$, $\bar{p} \le \bar{m}$ such that $\sum_{i=1}^{r(J)} p_i + \bar{p} = \rank{(\Ab_2)_{\bm{:},I}}$, and $\Lb_1, \Vb, \Cb$ satisfy
		\begin{align}
		\Pc_{\col(\Ub_{J}\Vb)^\perp} \Yb_{\bm{:},J}\Xb_{\bm{:},J}^\ttop\Lb_1^\ttop\Cb^\ttop\Pc_{\ker(\Vb)} = \zero. \label{eq: sufficiency_nonlinear}
		\end{align}
		Moreover, a critical point in $\Kc_{I \times J}$ exists if and only if there exists such $\Cb, \Vb, \Lb_1$ that
		\begin{align}
		(\Ab_1)_{I,\bm{:}} \Xb_{\bm{:},J} = \Cb^{-1} \Vb^\ttop \Ub_J^\ttop \Yb \Xb_{\bm{:},J}^\dagger \Xb_{\bm{:},J} +  \Cb^{-1} (\Ib - \Pc_{\ker(\Vb)}) \Cb \Lb_1  \Xb_{\bm{:},J} &\ge 0,\\
		\textrm{Other entries of}~ \Ab_1 \Xb &<0. 
		\end{align}
\end{proposition}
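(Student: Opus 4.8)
The plan is to build the entire argument on the reduction already recorded in \cref{eq: reduce_linear}: on the cone $\Kc_{I\times J}$ the ReLU activation pattern is frozen, so $\Lc_N$ coincides---up to the additive constant $\tfrac12\norm{\Yb_{\bm{:},J^c}}_F^2$---with the shallow linear loss $\Lc$ evaluated at the active blocks $((\Ab_2)_{\bm{:},I},(\Ab_1)_{I,\bm{:}})$ on the reduced data pair $(\Xb_{\bm{:},J},\Yb_{\bm{:},J})$. Since \Cref{thm: necess_form} assumes nothing about dimensions or data matrices, it applies verbatim to this reduced loss, and the whole proof amounts to three moves: (i) transferring the critical-point condition from $\Lc_N$ to the reduced linear loss, (ii) instantiating \Cref{thm: necess_form} under the obvious substitution, and (iii) appending the cone-membership constraint to obtain the existence criterion.

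First I would work on the interior of $\Kc_{I\times J}$, i.e.\ where the activation pattern is strict, so that every entry of $\Ab_1\Xb$ is bounded away from $0$ and $\sigma$ is smooth with a locally constant pattern. There $\Lc_N$ is continuously differentiable and equal to the right-hand side of \cref{eq: reduce_linear}, which depends only on $(\Ab_2)_{\bm{:},I}$ and $(\Ab_1)_{I,\bm{:}}$. Consequently the partial gradients of $\Lc_N$ with respect to the inactive entries $(\Ab_2)_{\bm{:},I^c}$ and $(\Ab_1)_{I^c,\bm{:}}$ vanish identically, while the partials with respect to the active blocks are exactly the gradients of the reduced linear loss. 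Hence a differentiable critical point of $\Lc_N$ in $\Kc_{I\times J}$ is precisely a point whose active blocks $((\Ab_2)_{\bm{:},I},(\Ab_1)_{I,\bm{:}})$ form a critical point of $\Lc$ on $(\Xb_{\bm{:},J},\Yb_{\bm{:},J})$, the inactive entries being unconstrained by criticality.

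Next I would invoke \Cref{thm: necess_form} on the reduced loss under the substitution $\Ab_2\mapsto(\Ab_2)_{\bm{:},I}$, $\Ab_1\mapsto(\Ab_1)_{I,\bm{:}}$, $\Xb\mapsto\Xb_{\bm{:},J}$, $\Yb\mapsto\Yb_{\bm{:},J}$, so that $\Sigmab\mapsto\Sigmab_J$ with SVD $\Ub_J\Lambdab_J\Ub_J^\ttop$. This yields at once the necessary-and-sufficient forms \cref{eq: form_A1_nonlinear,eq: form_A2_nonlinear} for the active blocks, the block structure of $\Vb$ governed by the multiplicities $m_i,\bar m(J)$, and the sufficiency condition \cref{eq: sufficiency_nonlinear}. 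For existence, a point characterized this way genuinely lies in $\Kc_{I\times J}$ iff the sign constraints of \cref{eq: cone} hold. I would right-multiply \cref{eq: form_A1_nonlinear} by $\Xb_{\bm{:},J}$ and simplify using the pseudoinverse identity $\Xb_{\bm{:},J}\Xb_{\bm{:},J}^\dagger\Xb_{\bm{:},J}=\Xb_{\bm{:},J}$ together with the orthonormal-column structure of $\Vb$ (so that $\Vb^\ttop\Vb$ is the projection onto $\ker(\Vb)^\perp$), obtaining the stated closed form for $(\Ab_1)_{I,\bm{:}}\Xb_{\bm{:},J}$, which must be $\ge 0$; the remaining blocks of $\Ab_1\Xb$---namely $(\Ab_1)_{I,\bm{:}}\Xb_{\bm{:},J^c}$ and the inactive rows $(\Ab_1)_{I^c,\bm{:}}\Xb$---are required to be $<0$.

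The main obstacle I anticipate is the bookkeeping of the full parameter set in the criticality step rather than the routine application of \Cref{thm: necess_form}. One must verify that vanishing of $\nabla\Lc_N$ over all of $(\Ab_1,\Ab_2)$ is equivalent to the active blocks being a critical point of the reduced linear problem, i.e.\ that the inactive rows and columns impose no spurious stationarity conditions, and yet that precisely this freedom in the inactive rows is what must be exploited to realize the strict negativity of the remaining entries of $\Ab_1\Xb$ in the existence criterion. Keeping the active (criticality-constrained) and inactive (membership-constrained) portions of $\Ab_1$ cleanly separated---and confirming that the two requirements can be met simultaneously---is where the argument needs the most care.
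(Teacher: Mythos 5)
Your proposal is correct and follows essentially the same route as the paper, which proves this proposition in the main text by observing that on $\Kc_{I \times J}$ the loss collapses to the shallow linear loss \cref{eq: reduce_linear} in the active blocks $((\Ab_2)_{\bm{:},I},(\Ab_1)_{I,\bm{:}})$ with data $(\Xb_{\bm{:},J},\Yb_{\bm{:},J})$, applying \Cref{thm: necess_form} under that substitution, and then substituting the resulting forms into the cone condition \cref{eq: cone} to obtain the existence criterion. Your extra care about the inactive blocks (their partial gradients vanish identically, so they are constrained only by cone membership and supply the freedom needed for the strict-negativity constraints) is a point the paper leaves implicit but is exactly right.
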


To further illustrate, we consider a special case where the nonlinear network has one unit in the hidden layer, i.e., $d_1 = 1$, in which case $\Ab_1$ and $\Ab_2$ are row and column vectors, respectively. Then, the entire parameter space can be partitioned into disjoint cones taking the form of $\Kc_{I \times J}$, and $I = \{1 \}$ is the only nontrivial choice. We obtain the following result from \Cref{thm: nonlinear_multiunit_landscape}.
\begin{proposition}[Characterization of critical points]\label{thm: nonlinear_oneunit_landscape1}
	Consider $\Lc_N$ with $d_1 = 1$ and any $J \subset \{1,\ldots,m\}$. Then, any nonzero critical point of $\Lc_N$ within $\Kc_{\{1\} \times J}$ can be necessarily and sufficiently characterized by an $\lb_1^\ttop\in \RR^{1\times d_0 }$, a block unit vector $\vb \in \RR^{d_2 \times 1}$ and a scalar $c \in \RR$ such that
	\begin{align}
	\Ab_1 = c^{-1} \vb^\ttop \Ub_J^\ttop \Yb_{\bm{:},J} \Xb_{\bm{:},J}^\dagger + \lb_1^\ttop -  \lb_1^\ttop \Xb_{\bm{:},J}\Xb_{\bm{:},J}^\dagger, \quad \Ab_2 = c\Ub_J \vb.
	\end{align}
	Specifically, $\vb$ is a unit vector that is supported on the entries corresponding to the same singular value of $\Sigmab_{J}$. Moreover, a nonzero critical point in $\Kc_{\{1\}\times J}$ exists if and only if there exist such $c, \vb, \lb_1^\ttop$ that satisfy
	\begin{align}
	&\Ab_1 \Xb_{\bm{:},J} = c^{-1} \vb^\ttop \Ub_J^\ttop \Yb_{\bm{:},J} \Xb_{\bm{:},J}^\dagger \Xb_{\bm{:},J} \ge 0, \label{eq: exist1}\\
	&\Ab_1 \Xb_{\bm{:},J^c} = c^{-1} \vb^\ttop \Ub_J^\ttop \Yb_{\bm{:},J} \Xb_{\bm{:},J}^\dagger \Xb_{\bm{:},J^c} + \lb_1^\ttop  \Xb_{\bm{:},J^c} - \lb_1^\ttop\Xb_{\bm{:},J} \Xb_{\bm{:},J}^\dagger  \Xb_{\bm{:},J^c} <0.  \label{eq: exist2}
	\end{align}
\end{proposition}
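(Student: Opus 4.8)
The plan is to obtain this result as a direct specialization of \Cref{thm: nonlinear_multiunit_landscape} to the case $d_1 = 1$, $I = \{1\}$. In this case $|I| = 1$, so the characterizing objects collapse to scalars and vectors: $\Cb \in \RR^{1\times 1}$ becomes a scalar $c$, $\Lb_1 \in \RR^{1 \times d_0}$ becomes a row vector $\lb_1^\ttop$, and $\Vb \in \RR^{d_2 \times 1}$ becomes a single column $\vb$. Since we restrict to a nonzero critical point, $\Ab_2 = (\Ab_2)_{\bm{:},I}$ is a nonzero column vector and hence $\rank{(\Ab_2)_{\bm{:},I}} = 1$. The block/rank constraint $\sum_{i=1}^{r(J)} p_i + \bar{p} = \rank{(\Ab_2)_{\bm{:},I}} = 1$ then forces exactly one of the integers $p_1, \ldots, p_{r(J)}, \bar{p}$ to equal $1$ and the rest to vanish. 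Consequently $\Vb$ reduces to a single orthonormal column, i.e.\ a unit vector $\vb$, whose support lies entirely within the block associated to one singular value of $\Sigmab_J$. This establishes the claim that $\vb$ is a block unit vector supported on the entries corresponding to a single singular value.

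With these substitutions in hand, I would plug $\Cb = c$, $\Vb = \vb$, $\Lb_1 = \lb_1^\ttop$ into \cref{eq: form_A2_nonlinear} to obtain $\Ab_2 = c\Ub_J\vb$ immediately. For \cref{eq: form_A1_nonlinear}, the key simplification is that $\vb$ is a unit vector, so $\Vb^\ttop\Vb = \vb^\ttop\vb = 1$ and therefore the coefficient of the third term is $\Cb^{-1}\Vb^\ttop\Vb\Cb = c^{-1}\cdot 1 \cdot c = 1$. Substituting yields exactly the stated form $\Ab_1 = c^{-1}\vb^\ttop\Ub_J^\ttop\Yb_{\bm{:},J}\Xb_{\bm{:},J}^\dagger + \lb_1^\ttop - \lb_1^\ttop\Xb_{\bm{:},J}\Xb_{\bm{:},J}^\dagger$.

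Next I would verify that the sufficiency condition \cref{eq: sufficiency_nonlinear} is automatically satisfied, which explains why no analogous constraint appears in the statement. Viewing $\Vb = \vb$ as a linear map $\RR^{1} \to \RR^{d_2}$, a nonzero $\vb$ has trivial null space $\ker(\vb) = \{0\}$, so $\Pc_{\ker(\Vb)} = \zero$ and the left-hand side of \cref{eq: sufficiency_nonlinear} vanishes identically. Hence every $(c, \vb, \lb_1^\ttop)$ of the above form gives a critical point of the loss reduced on the cone, and the necessary-and-sufficient characterization of \Cref{thm: nonlinear_multiunit_landscape} applies verbatim here.

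Finally, to derive the existence conditions \cref{eq: exist1,eq: exist2}, I would impose the defining inequalities of the cone $\Kc_{\{1\}\times J}$ from \cref{eq: cone}, namely $\Ab_1\Xb_{\bm{:},J} \ge 0$ and $\Ab_1\Xb_{\bm{:},J^c} < 0$, on the analytical form of $\Ab_1$. Right-multiplying $\Ab_1$ by $\Xb_{\bm{:},J}$ and using the pseudoinverse identity $\Xb_{\bm{:},J}\Xb_{\bm{:},J}^\dagger\Xb_{\bm{:},J} = \Xb_{\bm{:},J}$, the two terms carrying $\lb_1^\ttop$ cancel, leaving precisely \cref{eq: exist1}; right-multiplying instead by $\Xb_{\bm{:},J^c}$ retains the $\lb_1^\ttop$ terms and gives \cref{eq: exist2}. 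The main (and only mildly subtle) step is the structural argument in the first paragraph that the rank-one and block constraints collapse $\Vb$ to a unit vector confined to a single singular-value block, together with the observation that the sufficiency condition trivializes; the remainder consists of the routine simplifications using $\vb^\ttop\vb = 1$, $\ker(\vb) = \{0\}$, and the pseudoinverse identity.
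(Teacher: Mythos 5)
Your proof is correct and takes essentially the same route as the paper: the paper also obtains this proposition as a direct specialization of \Cref{thm: nonlinear_multiunit_landscape} to $|I|=1$, where the rank-one constraint collapses $\Vb$ to a single unit column confined to one singular-value block, the condition \cref{eq: sufficiency_nonlinear} trivializes since $\Pc_{\ker(\vb)} = \zero$ for $\vb \ne \zero$, and the existence conditions follow by imposing the cone inequalities of \cref{eq: cone} on the analytical form of $\Ab_1$ (using $\Xb_{\bm{:},J}\Xb_{\bm{:},J}^\dagger\Xb_{\bm{:},J} = \Xb_{\bm{:},J}$ to cancel the $\lb_1^\ttop$ terms in \cref{eq: exist1}).
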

We note that \Cref{thm: nonlinear_oneunit_landscape1} characterizes both the existence and the forms of critical points of $\Lc_N$ over the entire parameter space for nonlinear networks with a single hidden unit. The condition in \cref{eq: sufficiency_nonlinear} is guaranteed because $\Pc_{\ker(\vb)} = \zero$ for $\vb \ne \zero$. 

To further understand \Cref{thm: nonlinear_oneunit_landscape1}, suppose that there exists a critical point in $\Kc_{\{1\} \times J}$ with $\vb$ being supported on the entries that correspond to the $i$-th singular value of $\Sigmab_{J}$.
Then, \Cref{lemma: loss_value} implies that $\Lc_N= \tfrac{1}{2}\mathrm{Tr}(\Yb\Yb^\ttop) - \tfrac{1}{2} \sigma_i(J).$ In particular, the critical point achieves the local minimum $\tfrac{1}{2}\mathrm{Tr}(\Yb\Yb^\ttop) - \tfrac{1}{2} \sigma_1(J)$ in $\Kc_{\{1\} \times J}$ with $i = 1$. This is because in this case the critical point is full rank with an optimal order, and hence corresponds to the global minimum of the linear network in \cref{eq: reduce_linear}. 
Since the singular values of $\Sigmab_{J}$ may vary with the choice of $J$, $\Lc_N$ may achieve different local minima in different cones.  Thus, local minimum that is not global minimum can exist for $\Lc_N$. The following proposition concludes this fact by considering a concrete example.
\begin{proposition}\label{prop: subopt}
	For one-hidden-layer nonlinear neural networks with ReLU activation function, there exists local minimum that is not global minimum, and there also exists local maximum.
\end{proposition}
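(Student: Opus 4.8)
The plan is to establish both claims through a single explicit construction in the one-hidden-unit regime ($d_1 = 1$), where \Cref{thm: nonlinear_oneunit_landscape1} gives a complete description of the critical points cone by cone. The discussion preceding the proposition already shows that within each cone $\Kc_{\{1\}\times J}$ the smallest value attainable by $\Lc_N$ is $\tfrac12\Tr(\Yb\Yb^\ttop) - \tfrac12\sigma_1(J)$, achieved by a full-rank optimal-order critical point, and that $\sigma_1(J)$, the largest singular value of $\Sigmab_J$, generally varies with $J$. The strategy is therefore to choose small data matrices $\Xb, \Yb$ for which two cones $\Kc_{\{1\}\times J_1}$ and $\Kc_{\{1\}\times J_2}$ each contain an attainable critical point but with $\sigma_1(J_1) \ne \sigma_1(J_2)$, so that the two cone-wise minima are distinct values of $\Lc_N$.

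First I would fix concrete $\Xb, \Yb$, for instance with $d_0 = d_2 = 1$, $m = 2$ and columns of $\Xb$ of opposite sign, so that a positive value of $\Ab_1$ activates one data column while a negative value activates the other; this partitions the parameter space into two cones whose matrices $\Sigmab_{J_1}, \Sigmab_{J_2}$ have different leading singular values. For each cone I would produce the critical point from \Cref{thm: nonlinear_oneunit_landscape1} and, crucially, verify the existence conditions \cref{eq: exist1,eq: exist2} with \emph{strict} inequalities on the inactive entries, which places the critical point in the interior of its cone. Interiority is exactly what upgrades a minimizer of the reduced linear loss in \cref{eq: reduce_linear} into a genuine local minimizer of $\Lc_N$: a sufficiently small neighborhood stays inside the same cone, on which $\Lc_N$ coincides with the linear-network loss whose landscape is governed by \Cref{thm: landscape}, and whose value is computed by \Cref{lemma: loss_value}. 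The cone with the smaller $\sigma_1(J)$ then yields a local minimum whose value strictly exceeds that of the other cone, hence a local minimum that is not global.

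For the local maximum I would examine a point on the common boundary of the cones, namely a choice of $\Ab_1$ for which $\Ab_1\Xb$ is entrywise nonpositive so that $\sigma(\Ab_1\Xb) = \zero$ and the output vanishes, together with a suitable sign of $\Ab_2$. At such a point $\Lc_N = \tfrac12\norm{\Yb}_F^2$, and $\Lc_N$ is constant along every direction that keeps the output zero. The remaining task is a first-order directional analysis across the boundary: moving into either adjacent cone turns on a contribution $\Ab_2\sigma(\Ab_1\Xb)$ that, when $\Yb$ is taken entrywise positive and the sign of $\Ab_2$ is chosen accordingly, correlates positively with $\Yb$ and hence decreases the squared residual to leading order. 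Showing this decrease (or flatness) in every feasible perturbation direction certifies the point as a local maximum, and the fact that $\Lc_N$ is unbounded above on cones with nonzero output confirms it is not a global maximum.

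The main obstacle is that the reduction \cref{eq: reduce_linear} to a linear network is valid only within a single cone, so none of the earlier landscape results apply across cone boundaries. Both the non-global local minimum (through the interiority and strict-inequality check) and especially the local maximum (through the cross-cone directional analysis) require controlling $\Lc_N$ on neighborhoods that may intersect several cones at once. Handling this piecewise, non-smooth behavior, rather than any single algebraic computation, is the delicate part; once the example is pinned down, each cone-wise evaluation is routine via \Cref{thm: nonlinear_oneunit_landscape1} and \Cref{lemma: loss_value}.
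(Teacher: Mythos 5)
Your argument for the first claim is essentially the paper's: the paper instantiates the same two-cone idea with $d_1=1$, $\Xb=\mathrm{diag}(1,1)$, $\Yb=\mathrm{diag}(2,1)$, exhibiting interior critical points of $\Kc_{\{1\}\times\{1\}}$ and $\Kc_{\{1\}\times\{2\}}$ with values $\tfrac12$ and $2$ via \Cref{thm: nonlinear_oneunit_landscape1} and \Cref{lemma: loss_value}; your variant with $d_0=d_2=1$ and opposite-sign columns of $\Xb$ works equally well, and the interiority check you emphasize is exactly what the paper verifies through \cref{eq: exist1,eq: exist2}. Where you genuinely diverge is the local maximum. The paper gets it almost for free: in the dead cone $\Kc_{\{1\}\times\emptyset}$ the output vanishes identically, so $\Lc_N$ is constant ($=\tfrac52$) on an open set and every point there is trivially a local maximum (and simultaneously a local minimum). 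You instead take a zero-output point on a cone boundary and run a first-order cross-cone analysis showing that entering any adjacent cone strictly decreases the loss. Your route costs more work --- you must control the sign of $\langle \Ab_2\,\sigma(\Ab_1\Xb), \Yb\rangle$ against the quadratic term uniformly over all perturbation directions, which is precisely where the piecewise structure bites, and you must accept flat directions that keep the output zero --- but it buys a local maximum that is not also a local minimum, arguably a more convincing witness than the paper's point of local constancy. Both arguments are valid proofs of the proposition; I see no gap in yours beyond the routine verifications you already identify.
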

The above proposition is demonstrated by the following example.
\begin{example}
	Consider the loss function $\Lc_N$ of the nonlinear network with $d_2 = d_0 = 2,$ and $d_1 = 1$. The input and output data are set to be $\Xb = \mathrm{diag}(1,1), \Yb = \mathrm{diag}(2,1)$.
\end{example}
First, consider the cone $\Kc_{I \times J}$ with $I = \{1\}, J = \{1\}$. Calculation yields that $\Sigmab_J = \mathrm{diag}(4, 0)$, and the conditions for existence of critical points in \cref{eq: exist1,eq: exist2} hold if $2c^{-1}(\vb)_{1,:} \ge 0, (\lb_1)_{2,:} < 0$. Then choosing $c = 1, \vb = (1, 0)^\ttop, \lb_1 = (1, -1)^\ttop$ yields a local minimum in $\Kc_{I \times J}$, because the nonzero entry in $\vb$ corresponds to the largest singular value of $\Sigmab_{J}$. Then, calculation shows that the local minimum achieves $\Lc_N = \frac{1}{2}$. On the other hand, consider the cone $\Kc_{I \times J'}$ with $I = \{1\}, J' = \{2\}$, in which $\Sigmab_{J'} = \mathrm{diag}(0, 1)$. The conditions for existence of critical points in \cref{eq: exist1,eq: exist2} hold if $c^{-1} (\vb)_{1,:} \ge 0, (\lb_1)_{1,:} < 0$. Similarly to the previous case, choosing $c = 1, \vb = (1, 0)^\ttop, \lb_1 = (-1, 0)^\ttop$ yields a local minimum that achieves the function value $\Lc_n = 2$. Hence, local minimum that is not global minimum does exist. Moreover, in the cone $\Kc_{I \times J''}$ with $I = \{1\}, J'' = \emptyset$, the function $\Lc_N$ remains to be the constant $\frac{5}{2}$, and all points in this cone are local minimum or local maximum.  
Thus, the landscape of the loss function of nonlinear networks is very different from that of the loss function of linear networks.

\section*{Conclusion}
In this paper, we provide full characterization of the analytical forms of the critical points for the square loss function of three types of neural networks, namely, shallow linear networks, deep linear networks, and shallow ReLU nonlinear networks. We show that such analytical forms of the critical points have direct implications on the values of the corresponding loss functions, achievement of global minimum, and various landscape properties around these critical points. As a consequence, the loss function for linear networks has no spurious local minimum, while such point does exist for nonlinear networks with ReLU activation. In the future, it is interesting to further explore nonlinear neural networks. In particular, we wish to characterize the analytical form of critical points for deep nonlinear networks and over the full parameter space. Such results will further facilitate the understanding of the landscape properties around these critical points.


\bibliography{ref}
\bibliographystyle{iclr2018_conference}

\clearpage
\onecolumn
\appendix{
	
{
	\centering\Large \textbf{Supplementary Materials}
}

\section*{Proof of \Cref{thm: necess_form}}
Notations: For any matrix $\Mb$, denote $\vecz{\Mb}$ as the column vector formed by stacking its columns. Denote the Kronecker product as ``$\otimes$''. Then, the following useful relationships hold  for any dimension compatible matrices $\Mb,\Ub, \Vb, \Wb$:
\begin{align}
&\vecz{\Ub\Mb\Vb} = (\Vb^\ttop \ootimes \Ub) \vecz{\Mb},  \label{eq: kronecker_1}\\
&(\Ub \ootimes \Vb)^\dagger = \Ub^\dagger \ootimes \Vb^\dagger, \label{eq: kronecker_3}\\
&(\Mb \ootimes \Wb) (\Ub \ootimes \Vb) = (\Mb\Ub) \ootimes (\Wb\Vb), \label{eq: kronecker_4} \\
&(\Mb^\ttop\Mb)^\dagger \Mb^\ttop = \Mb^\ttop (\Mb\Mb^\ttop)^\dagger = \Mb^\dagger, \label{eq: kronecker_2} \\
&\Mb \Mb^\dagger \Mb = \Mb,~\Mb^\dagger \Mb \Mb^\dagger = \Mb^\dagger. \label{eq: kronecker_5}
\end{align}

Recall that a point $(\Ab_1, \Ab_2)$ is a critical point of $\Lc$ if it satisfies 
	\begin{align}
	\nabla_{\Ab_1}\Lc &= \Ab_2^\ttop (\Ab_2\Ab_1\Xb - \Yb) \Xb^\ttop = \bm 0, \label{eq: critc1} \\
	\nabla_{\Ab_2}\Lc &= (\Ab_2\Ab_1\Xb - \Yb) \Xb^\ttop\Ab_1^\ttop = \bm 0. \label{eq: critc2}
	\end{align}
We first prove \cref{eq: form_A1,eq: form_A2}.
\begin{lemma}\label{lemma: necess_cond}
	Let $(\Ab_2, \Ab_1)$ be a critical point of $\Lc$. Then it must satisfy, for some $\Lb_1\in \RR^{d_1\times d_0}$, that
	\begin{align}
	&\Ab_1 =  \Ab_2^\dagger \Yb\Xb^\dagger + \Lb_1 - \Ab_2^\dagger \Ab_2\Lb_1\Xb\Xb^\dagger, \label{eq: critic3}\\
	&\Pc_{\col(\Ab_2)} \Sigmab\Pc_{\col(\Ab_2)} = \Sigmab \Pc_{\col(\Ab_2)} = \Pc_{\col(\Ab_2)} \Sigmab. \label{eq: critic4}
	\end{align}
\end{lemma}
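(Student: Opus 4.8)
The plan is to treat the two stationarity conditions \cref{eq: critc1,eq: critc2} asymmetrically: I will read the first equation as a linear matrix equation in $\Ab_1$ (with $\Ab_2$ held fixed) to extract the closed form \cref{eq: critic3}, and then substitute that form back into the second equation to obtain the projection identity \cref{eq: critic4}.

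For \cref{eq: critic3}, I would first rewrite \cref{eq: critc1} as $\Ab_2^\ttop\Ab_2\,\Ab_1\,\Xb\Xb^\ttop = \Ab_2^\ttop\Yb\Xb^\ttop$ and vectorize it via \cref{eq: kronecker_1}, turning it into a linear system $\Mb\,\vecz{\Ab_1} = \bb$ with $\Mb = (\Xb\Xb^\ttop)\ootimes(\Ab_2^\ttop\Ab_2)$ and $\bb = \vecz{\Ab_2^\ttop\Yb\Xb^\ttop}$. Because the given critical point already furnishes one solution, the system is consistent, so every solution has the form $\vecz{\Ab_1} = \Mb^\dagger\bb + (\Ib - \Mb^\dagger\Mb)\vecz{\Lb_1}$ for some $\Lb_1$. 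The remaining work is to push both pieces back into matrix form: \cref{eq: kronecker_3,eq: kronecker_4} factor the Kronecker pseudoinverse, and the reductions $(\Ab_2^\ttop\Ab_2)^\dagger\Ab_2^\ttop = \Ab_2^\dagger$ and $\Xb^\ttop(\Xb\Xb^\ttop)^\dagger = \Xb^\dagger$ coming from \cref{eq: kronecker_2,eq: kronecker_5} collapse the particular part to $\Ab_2^\dagger\Yb\Xb^\dagger$ and the homogeneous part to $\Lb_1 - \Ab_2^\dagger\Ab_2\Lb_1\Xb\Xb^\dagger$, which is exactly \cref{eq: critic3}.

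For \cref{eq: critic4}, I would substitute \cref{eq: critic3} into the product $\Ab_2\Ab_1\Xb$; using $\Xb\Xb^\dagger\Xb = \Xb$ and $\Ab_2\Ab_2^\dagger\Ab_2 = \Ab_2$ from \cref{eq: kronecker_5}, the $\Lb_1$-dependent term drops out and the product simplifies to $\Ab_2\Ab_1\Xb = \Pc_{\col(\Ab_2)}\Yb\Xb^\dagger\Xb$, so that $\Ab_2\Ab_1\Xb\Yb^\ttop = \Pc_{\col(\Ab_2)}\Sigmab$. I then exploit the second stationarity condition: multiplying \cref{eq: critc2} on the right by $\Ab_2^\ttop$ gives $(\Ab_2\Ab_1\Xb)(\Ab_2\Ab_1\Xb)^\ttop = \Yb(\Ab_2\Ab_1\Xb)^\ttop$, whose left-hand side is manifestly symmetric, hence so is $\Ab_2\Ab_1\Xb\Yb^\ttop = \Pc_{\col(\Ab_2)}\Sigmab$. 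Symmetry of $\Pc_{\col(\Ab_2)}\Sigmab$, combined with the symmetry of $\Pc_{\col(\Ab_2)}$ and of $\Sigmab$, forces $\Pc_{\col(\Ab_2)}\Sigmab = \Sigmab\Pc_{\col(\Ab_2)}$; multiplying once more by $\Pc_{\col(\Ab_2)}$ and using idempotency then yields all three equalities in \cref{eq: critic4}.

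I expect the main obstacle to be the bookkeeping in the first step: vectorizing, factoring the Kronecker pseudoinverse correctly, and driving the pseudoinverse identities through so that the unwieldy expression in $\vecz{\cdot}$ collapses cleanly to \cref{eq: critic3}, while also being careful to justify consistency of the linear system before invoking the general-solution formula. By contrast, once the product form $\Ab_2\Ab_1\Xb = \Pc_{\col(\Ab_2)}\Yb\Xb^\dagger\Xb$ is in hand, the symmetry argument delivering \cref{eq: critic4} is short.
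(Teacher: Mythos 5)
Your proposal is correct and follows essentially the same route as the paper's proof: vectorize the first stationarity condition, invoke the general solution of a consistent linear system, collapse the Kronecker expressions via the pseudoinverse identities, and then combine $\Ab_2\Ab_1\Xb = \Pc_{\col(\Ab_2)}\Yb\Xb^\dagger\Xb$ with the right-multiplication of the second stationarity condition by $\Ab_2^\ttop$ and a symmetry argument. The only (immaterial) difference is the ordering at the end — you deduce symmetry of $\Pc_{\col(\Ab_2)}\Sigmab$ first and then the identity $\Pc_{\col(\Ab_2)}\Sigmab\Pc_{\col(\Ab_2)} = \Sigmab\Pc_{\col(\Ab_2)}$, whereas the paper does the reverse.
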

\begin{proof}[Proof of \Cref{lemma: necess_cond}]
	Since $(\Ab_2, \Ab_1)$ is a critical point of $\Lc$, \cref{eq: critc1} implies that 
	\begin{align*}
	\Ab_2^\ttop \Ab_2\Ab_1\Xb  \Xb^\ttop = \Ab_2^\ttop \Yb\Xb^\ttop.
	\end{align*}
	Applying the vectorizing operator on both sides of the above equation and use the property in \cref{eq: kronecker_1}, we conclude that
	\begin{align*}
	(\Xb\Xb^\ttop \ootimes \Ab_2^\ttop \Ab_2)\vecz{\Ab_1} = (\Xb\ootimes \Ab_2^\ttop)\vecz{\Yb}.
	\end{align*}
	Since $\vecz{\Ab_1}$ is a solution of the above linear equation, it must take the form of the solution of linear systems, i.e., for some $\Lb_1\in \RR^{d_1\times d_0}$, we have  
	\begin{align*}
	\vecz{\Ab_1} &=  (\Xb\Xb^\ttop \ootimes \Ab_2^\ttop \Ab_2)^\dagger (\Xb\ootimes \Ab_2^\ttop)\vecz{\Yb} + [\Ib - (\Xb\Xb^\ttop \ootimes \Ab_2^\ttop \Ab_2)^\dagger (\Xb\Xb^\ttop \ootimes \Ab_2^\ttop \Ab_2)]\vecz{\Lb_1} \\
	&\overset{(i)}{=} \left( (\Xb\Xb^\ttop)^\dagger\Xb \ootimes (\Ab_2^\ttop \Ab_2)^\dagger\Ab_2^\ttop  \right)\vecz{\Yb} + [\Ib - (\Xb\Xb^\ttop)^\dagger \Xb\Xb^\ttop \ootimes (\Ab_2^\ttop \Ab_2)^\dagger \Ab_2^\ttop \Ab_2] \vecz{\Lb_1} \\
	&= \vecz{(\Ab_2^\ttop \Ab_2)^\dagger\Ab_2^\ttop \Yb \Xb^\ttop (\Xb\Xb^\ttop)^\dagger + \Lb_1 - (\Ab_2^\ttop \Ab_2)^\dagger \Ab_2^\ttop \Ab_2 \Lb_1 \Xb\Xb^\ttop(\Xb\Xb^\ttop)^\dagger } \\
	&\overset{(ii)}{=} \vecz{\Ab_2^\dagger \Yb \Xb^\dagger + \Lb_1 - \Ab_2^\dagger \Ab_2 \Lb_1 \Xb\Xb^\dagger }
	\end{align*}
	where (i) uses \cref{eq: kronecker_3,eq: kronecker_4} and (ii) uses \cref{eq: kronecker_2}. Then, \cref{eq: critic3} follows by reshaping the vector into a matrix.
	
Next we prove \cref{eq: critic4}. Multiplying both sides of \cref{eq: critic3} by $\Ab_2$ on the left and by $\Xb$ on the right and then using \cref{eq: kronecker_5}, we obtain
	\begin{align}
	\Ab_2\Ab_1\Xb = \Ab_2 \Ab_2^\dagger \Yb \Xb^\dagger\Xb = \Pc_{\col(\Ab_2)} \Yb \Xb^\dagger\Xb. \label{eq: product}
	\end{align}
	Also, multiplying both sides of \cref{eq: critc2} by $\Ab_2^\ttop$ on the right yields that $\Ab_2\Ab_1\Xb  \Xb^\ttop\Ab_1^\ttop \Ab_2^\ttop = \Yb \Xb^\ttop\Ab_1^\ttop \Ab_2^\ttop$. This equation, together with the above expression of $\Ab_2\Ab_1 \Xb$, further implies that
	\begin{align*}
	\Pc_{\col(\Ab_2)} \Sigmab \Pc_{\col(\Ab_2)} = \Sigmab\Pc_{\col(\Ab_2)}.
	\end{align*}
	Note that $\Pc_{\col(\Ab_2)} \Sigmab \Pc_{\col(\Ab_2)} $ is symmetric. Thus, we conclude that $\Sigmab\Pc_{\col(\Ab_2)} = \Pc_{\col(\Ab_2)}\Sigmab$.
\end{proof}

Next, we derive the form of $\Ab_2$.
Recall the full singular value decomposition $\Sigmab = \Ub \Lambdab \Ub^\ttop$, where $\Lambdab$ is a diagonal matrix with distinct singular values $\sigma_1 > \ldots >\sigma_r>0$ and multiplicities $m_1, \ldots, m_r$, respectively. We also assume that there are $\bar{m}$ number of zero singular values in $\Lambdab$. Using the fact that $\Pc_{\col(\Ab_2)} = \Ub \Pc_{\col(\Ub^\ttop\Ab_2)}\Ub^\ttop$, the last equality in \cref{eq: critic4} reduces to
\begin{align*}
\Lambdab \Pc_{\col(\Ub^\ttop\!\Ab_2)} = \Pc_{\col(\Ub^\ttop\!\Ab_2)} \Lambdab.
\end{align*}
By the multiplicity pattern of the singular values in $\Lambdab$, $\Pc_{\col(\Ub^\ttop\!\Ab_2)}$ must be block diagonal. Specifically, we can write $\Pc_{\col(\Ub^\ttop\!\Ab_2)} = \mathrm{diag}(\Pc_1, \ldots, \Pc_r, \overline{\Pc})$, where $\Pc_i\in \RR^{m_i\times m_i}$ and $\overline{\Pc} \in \RR^{\bar{m} \times \bar{m}}$. Also, since $\Pc_{\col(\Ub^\ttop\!\Ab_2)}$ is a projection, $\Pc_1, \ldots, \Pc_r, \overline{\Pc}$ must all be projections. Note that $\Pc_{\col(\Ub^\ttop\!\Ab_2)}$ has rank $\rank{\Ab_2}$, and suppose that $\Pc_1, \ldots, \Pc_r, \overline{\Pc}$ have ranks $p_1, \ldots, p_r, \bar{p}$, respectively. Then, we must have $p_i \le m_i$ for $i=1,\ldots,r$, $\bar{p} \le \bar{m}$ and $\sum_{i=1}^r p_i + \bar{p} = \rank{\Ab_2}$. Also, note that each projection can be expressed as $\Pc_i = \Vb_i\Vb_i^\ttop$ with $\Vb_i \in \RR^{m_i\times p_i}$, $\overline{\Vb} \in \RR^{\bar{m}\times \bar{p}}$ consisting of orthonormal columns. Hence, we can write $\Pc_{\col(\Ub^\ttop\!\Ab_2)} = \widehat{\Vb} \widehat{\Vb}^\ttop$ where $\widehat{\Vb} = \mathrm{diag}(\Vb_1, \ldots, \Vb_r, \overline{\Vb})$.  We then conclude that $\Pc_{\col(\Ab_2)} = \Ub \Pc_{\col(\Ub^\ttop\Ab_2)}\Ub^\ttop = \Ub \widehat{\Vb} \widehat{\Vb}^\ttop\Ub^\ttop$. Thus, $\Ab_2$ has the same column space as $\Ub\widehat{\Vb}$, and there must exist an invertible matrix $\Cb \in \RR^{d_1 \times d_1}$ such that $\Ab_2 = \Ub [\widehat{\Vb}, \bm{0}] \Cb$, where $\bm{0} \in \RR^{d_2\times (d_1 - \rank{\Ab_2})}$ is a zero matrix. Denoting $\Vb = [\widehat{\Vb}, \bm{0}]$, we conclude that $\Ab_2 = \Ub \Vb \Cb$. Then, plugging $\Ab_2^\dagger = \Cb^{-1} \Vb^\ttop \Ub^\ttop$ into \cref{eq: critic3} yields the desired form of $\Ab_1$. 

We now prove \cref{eq: sufficiency}. Note that the above proof is based on the equations $\nabla_{\Ab_1}\Lc = \zero, (\nabla_{\Ab_2} \Lc) \Ab_2^\ttop = \zero$. Hence, the forms of $\Ab_1, \Ab_2$ in \cref{eq: form_A1,eq: form_A2} need to further satisfy $\nabla_{\Ab_2} \Lc = \zero$. By \cref{eq: product} and the form of $\Ab_2$, we obtain that $\Ab_2\Ab_1\Xb = \Pc_{\col(\Ab_2)} \Yb \Xb^\dagger\Xb = \Ub\Vb(\Ub\Vb)^\ttop \Yb \Xb^\dagger\Xb$. This expression, together with the form of $\Ab_1$ in \cref{eq: form_A1}, implies that 
\begin{align*}
\Ab_2\Ab_1\Xb \Xb^\ttop\Ab_1^\ttop &= \Ub\Vb(\Ub\Vb)^\ttop \Yb \Xb^\dagger\Xb\Xb^\ttop\Ab_1^\ttop \\
&\overset{(i)}{=}\Ub\Vb(\Ub\Vb)^\ttop \Yb \Xb^\ttop \Ab_1^\ttop \\ 
&= \Ub\Vb(\Ub\Vb)^\ttop \Sigmab \Ub\Vb  (\Cb^\ttop)^{-1} + \Ub\Vb(\Ub\Vb)^\ttop \Yb \Xb^\ttop \Lb_1^\ttop \\
&\qquad - \Ub\Vb(\Ub\Vb)^\ttop \Yb \Xb^\ttop \Lb_1^\ttop\Cb^\ttop\Vb^\ttop\Vb(\Cb^\ttop)^{-1} \\
&= \Ub\Vb\Vb^\ttop  \Lambdab \Vb (\Cb^\ttop)^{-1} + \Ub\Vb(\Ub\Vb)^\ttop \Yb \Xb^\ttop \Lb_1^\ttop(\Ib - \Cb^\ttop\Vb^\ttop\Vb(\Cb^\ttop)^{-1}) \\
&\overset{(ii)}{=}\Ub \Lambdab \Vb  (\Cb^\ttop)^{-1} + \Ub\Vb(\Ub\Vb)^\ttop \Yb \Xb^\ttop \Lb_1^\ttop(\Ib - \Cb^\ttop\Vb^\ttop\Vb(\Cb^\ttop)^{-1}),
\end{align*}
where (i) uses the fact that $\Xb^\dagger\Xb\Xb^\ttop = \Xb^\ttop$, (ii) uses the fact that the block pattern of $\Vb$ is compatible with the multiplicity pattern of the singular values in $\Lambdab$, and hence $\Vb\Vb^\ttop  \Lambdab \Vb =\Lambdab \Vb$. On the other hand, we also obtain that
\begin{align*}
\Yb \Xb^\ttop \Ab_1^\ttop &= \Sigmab \Ub\Vb (\Cb^\ttop)^{-1}  +\Yb \Xb^\ttop \Lb_1^\ttop (\Ib - \Cb^\ttop\Vb^\ttop\Vb(\Cb^\ttop)^{-1} )\\
&= \Ub \Lambdab \Vb  (\Cb^\ttop)^{-1}+\Yb \Xb^\ttop \Lb_1^\ttop (\Ib - \Cb^\ttop\Vb^\ttop\Vb(\Cb^\ttop)^{-1} ).
\end{align*}
Thus, to satisfy $\nabla_{\Ab_2} \Lc = \zero$ in \cref{eq: critc2}, we require that
\begin{align*}
(\Ib - \Ub\Vb(\Ub\Vb)^\ttop)\Yb \Xb^\ttop \Lb_1^\ttop (\Ib - \Cb^\ttop\Vb^\ttop\Vb(\Cb^\ttop)^{-1} ) = \bm 0,
\end{align*}
which is equivalent to 
\begin{align*}
(\Ib - \Ub\Vb(\Ub\Vb)^\ttop)\Yb \Xb^\ttop \Lb_1^\ttop\Cb^\ttop (\Ib - \Vb^\ttop\Vb) = \zero.
\end{align*}
Lastly, note that $(\Ib - \Ub\Vb(\Ub\Vb)^\ttop) = \Pc_{\col(\Ub\Vb)^\perp}$, and $(\Ib - \Vb^\ttop\Vb) = \Pc_{\ker(\Vb)}$, which concludes the proof.

\section*{Proof of \Cref{lemma: loss_value}}
	By expansion we obtain that $\Lc = \frac{1}{2}\mathrm{Tr}(\Yb\Yb^\ttop) - \mathrm{Tr}(\Ab_2\Ab_1 \Xb\Yb^\ttop) + \frac{1}{2} \mathrm{Tr}(\Ab_2\Ab_1 \Xb\Xb^\ttop\Ab_1^\ttop\Ab_2^\ttop)$. Consider any $(\Ab_1, \Ab_2)$ that satisfies \cref{eq: critc1}, we have shown that such a point also satisfies \cref{eq: product}, which further yields that
\begin{align}
\Lc &= \tfrac{1}{2}\mathrm{Tr}(\Yb\Yb^\ttop) - \mathrm{Tr}(\Ab_2\Ab_1 \Xb\Yb^\ttop) + \tfrac{1}{2} \mathrm{Tr}(\Ab_2\Ab_1 \Xb\Xb^\ttop\Ab_1^\ttop\Ab_2^\ttop) \nonumber\\
&=  \tfrac{1}{2}\mathrm{Tr}(\Yb\Yb^\ttop) - \mathrm{Tr}(\Pc_{\col(\Ab_2)}\Sigmab) + \tfrac{1}{2} \mathrm{Tr}(\Pc_{\col(\Ab_2)}\Sigmab \Pc_{\col(\Ab_2)}) \nonumber\\
&\overset{(i)}{=} \tfrac{1}{2}\mathrm{Tr}(\Yb\Yb^\ttop) - \tfrac{1}{2}\mathrm{Tr}(\Pc_{\col(\Ab_2)}\Sigmab) \nonumber\\
&\overset{(ii)}{=} \tfrac{1}{2}\mathrm{Tr}(\Yb\Yb^\ttop) - \tfrac{1}{2}\mathrm{Tr} (\Pc_{\col(\Ub^\ttop \Ab_2)}\Lambdab) \label{eq: func_value1}
\end{align}
where (i) follows from the fact that $\mathrm{Tr}(\Pc_{\col(\Ab_2)}\Sigmab \Pc_{\col(\Ab_2)})  = \mathrm{Tr}(\Pc_{\col(\Ab_2)}\Sigmab)$, and (ii) uses the fact that $\Pc_{\col(\Ab_2)} = \Ub \Pc_{\col(\Ub^\ttop\Ab_2)}\Ub^\ttop$.
In particular, a critical point $(\Ab_1, \Ab_2)$ satisfies \cref{eq: func_value1}. Moreover, using the  form of the critical point $\Ab_2 = \Ub\Vb\Cb$, \cref{eq: func_value1} further becomes
\begin{align*}
\Lc &=  \tfrac{1}{2}\mathrm{Tr}(\Yb\Yb^\ttop) - \tfrac{1}{2}\mathrm{Tr} (\Pc_{\col(\Vb\Cb)}\Lambdab)\\
&\overset{(i)}{=} \tfrac{1}{2}\mathrm{Tr}(\Yb\Yb^\ttop) - \tfrac{1}{2}\mathrm{Tr} (\Vb^\ttop\Lambdab\Vb) \\
&\overset{(ii)}{=}\tfrac{1}{2}\mathrm{Tr}(\Yb\Yb^\ttop) - \tfrac{1}{2}\sum_{i=1}^{r} p_i \sigma_i,
\end{align*}
where (i) is due to $\Pc_{\col(\Vb\Cb)} = \Pc_{\col(\Vb)} = \Vb\Vb^\ttop$, and (ii) utilizes the block pattern of $\Vb$ and the multiplicity pattern of $\Lambdab$ that are specified in \Cref{thm: necess_form}.

\section*{Proof of \Cref{prop: global_min}}

(1): Consider a critical point $(\Ab_1, \Ab_2)$ with the forms given by \Cref{thm: necess_form}. By choosing $\Lb_1 = \zero$, the condition in \cref{eq: sufficiency} is guaranteed. Then, we can specify a critical point with any $\Vb$ that satisfies the block pattern specified in \Cref{thm: necess_form}, i.e., we can choose any $p_i, i = 1, \ldots, r, \bar{p}$ such that $p_i \le m_i$ for $i=1,\ldots,r$, $\bar{p} \le \bar{m}$ and $\sum_{i=1}^r p_i + \bar{p} = \rank{\Ab_2}$. Suppose that $(\Ab_1, \Ab_2)$ is a global minimizer, \Cref{lemma: loss_value} gives that $\mathcal{L}(\Ab_1, \Ab_2) = \tfrac{1}{2}\mathrm{Tr}(\Yb\Yb^\ttop) - \tfrac{1}{2}\sum_{i=1}^{r} p_i \sigma_i$. Under the condition that $\min \{d_2, d_1\} \le \sum_{i=1}^{r} m_i$, the global minimum value is achieved by a full rank $\Ab_2$ with $\rank{\Ab_2} = \min \{d_2, d_1\}$ and $p_1 = m_1, \ldots, p_{k-1} = m_{k-1}, p_k = \rank{\Ab_2} - \sum_{i=1}^{k-1} m_i \le m_k$ for some $k\le r$. That is, the singular values are selected in a decreasing order to minimize the function value.

(2): If $(\Ab_2, \Ab_1)$ is a global minimizer and $\min \{d_y, d\} > \sum_{i=1}^{r} m_i$, the global minimum can be achieved by choosing $p_i = m_i$ for all $i = 1,\ldots,r$ and $\bar{p} \ge 0$. In particular, we do not need a full rank $\Ab_2$ to achieve the global minimum. For example, we can choose $\rank{\Ab_2} =  \sum_{i=1}^{r} m_i < \min \{d_y, d\}$ with $p_i = m_i$ for all $i = 1,\ldots,r$ and $\bar{p} = 0$.

\section*{Proof of \Cref{thm: critic_perturb}}
	We first prove item 1. Consider a non-optimal-order critical point $({\Ab_1}, {\Ab_2})$. By \Cref{thm: necess_form}, we can write $\Ab_2 = \Ub \Vb \Cb$ where $\Vb = [\mathrm{diag}(\Vb_1, \ldots, \Vb_r, \overline{\Vb}), \zero]$ and $\Vb_i, i = 1,\ldots, r, \overline{\Vb}$ consist of orthonormal columns. Define the orthonormal block diagonal matrix
	\begin{align}
	\Sb^\ttop := \mathrm{diag}\Bigg(  
	\begin{bmatrix}
	\Vb_1^\ttop \\ \Ob_1^\ttop 
	\end{bmatrix}, \cdots, 
	\begin{bmatrix}
	\Vb_r^\ttop \\ \Ob_r^\ttop 
	\end{bmatrix},
	\begin{bmatrix}
	\overline{\Vb}^\ttop \\ \overline{\Ob}^\ttop 
	\end{bmatrix}
	\Bigg), \label{eq: S}
	\end{align}
	where the matrices $\Ob_1,\cdots,\Ob_r, \overline{\Ob}$ are such that each diagonal block forms an orthonormal submatrix. By construction we have $\Sb^\ttop \Vb = [\mathrm{diag}(\Ib^{m_1 \times p_1}, \ldots, \Ib^{m_r \times p_r}, \Ib^{\bar m \times \bar p}), \zero]$, where $\Ib^{m_k \times p_k}$ corresponds to the first $p_k$ columns of the identity matrix $\Ib^{m_k \times m_k}$. Then, $\Ab_2$ can be alternatively written as $\Ab_2 = \Ub \Sb \Sb^\ttop \Vb \Cb$. Also, denote the columns of $\Ub\Sb$ as
	\begin{align*}
	\Ub\Sb = [\ub_{11}^s,\ldots,\ub_{1p_1}^s, \ldots, \ub_{r1}^s,\ldots,\ub_{rp_r}^s, \bar{\ub}_{1}^s,\ldots,\bar{\ub}_{\bar p}^s].
	\end{align*}
	 Since $({\Ab_1}, {\Ab_2})$ is a non-optimal-order critical point, there exists $1\le i < j \le r$ such that $p_i < m_i$ and $p_j >0$. Then, consider the following perturbation of $\Ub\Sb$ for some $\epsilon > 0$.
	 \begin{align*}
	 	\widetilde{\Mb} = \big[\ub_{11}^s,\ldots,\ub_{1p_1}^s, \ldots, \tfrac{\ub_{j1}^s + \epsilon \ub_{i(p_i + 1)}^s}{\sqrt{1+\epsilon^2}}, \ldots\bar{\ub}_{1}^s,\ldots,\bar{\ub}_{\bar p}^s \big],
	 \end{align*}
	  with which we further define the perturbation matrix $\widetilde{\Ab}_2 = \widetilde{\Mb} \Sb^\ttop \Vb \Cb$. Also, let the perturbation matrix $\widetilde{\Ab}_1$ be generated by \cref{eq: form_A1} with $\Ub \gets \widetilde{\Mb}$  and $\Vb \gets \Sb^\ttop \Vb$. Note that with this construction, $(\widetilde{\Ab}_1, \widetilde{\Ab}_2)$ satisfies \cref{eq: critic3}, which further implies \cref{eq: product} for  $(\widetilde{\Ab}_1, \widetilde{\Ab}_2)$, i.e.,	  $\widetilde{\Ab}_2\widetilde{\Ab}_1\Xb = \Pc_{\col(\widetilde{\Ab}_2)} \Yb \Xb^\dagger\Xb$. Thus, \cref{eq: func_value1} holds for the point $(\widetilde{\Ab}_1, \widetilde{\Ab}_2)$, and we obtain that 
	\begin{align*}
	\mathcal{L}(\widetilde{\Ab}_2, \widetilde{\Ab}_1) &= \tfrac{1}{2}\mathrm{Tr}(\Yb\Yb^\ttop) - \tfrac{1}{2}\mathrm{Tr} (\Pc_{\col(\Ub^\ttop \widetilde{\Ab}_2)}\Lambdab) \\
	&= \tfrac{1}{2}\mathrm{Tr}(\Yb\Yb^\ttop) - \tfrac{1}{2}\mathrm{Tr} (\Pc_{\col(\Sb^\ttop\Ub^\ttop \widetilde{\Ab}_2)}\Sb^\ttop \Lambdab \Sb) \\
	& = \tfrac{1}{2}\mathrm{Tr}(\Yb\Yb^\ttop) - \tfrac{1}{2}\mathrm{Tr} (\Pc_{\col(\Sb^\ttop\Ub^\ttop \widetilde{\Mb} \Sb^\ttop \Vb)}\Sb^\ttop\Lambdab\Sb) \\
	& = \tfrac{1}{2}\mathrm{Tr}(\Yb\Yb^\ttop) - \tfrac{1}{2}\mathrm{Tr} (\Pc_{\col(\Sb^\ttop\Ub^\ttop \widetilde{\Mb} \Sb^\ttop \Vb)}\Lambdab),
	\end{align*}
	where the last equality uses the fact that $\Sb^\ttop\Lambdab\Sb= \Lambdab$, as can be observed from the block pattern of $\Sb$ and the multiplicity pattern of $\Lambdab$. 
	Also, by the construction of $\widetilde{\Mb}$ and the form of $\Sb^\ttop \Vb$, a careful calculation shows that only the $i,j$-th diagonal elements of $\Pc_{\col(\Sb^\ttop\Ub^\ttop \widetilde{\Mb} \Sb^\ttop \Vb)}$ have changed, i.e., 
	\begin{align*}
	\big[\Pc_{\col(\Sb^\ttop\Ub^\ttop \widetilde{\Mb} \Sb^\ttop \Vb)}\big]_k = 
	\begin{cases} 
	\frac{\epsilon^2}{1 + \epsilon^2}, & \mbox{if } k=i \\
	\frac{1}{1 + \epsilon^2}, & \mbox{if } k=j
	\end{cases}
	\end{align*}
	As the index $i, j$ correspond to the singular values $\sigma_i, \sigma_j$, respectively, and $\sigma_i > \sigma_j$, one obtain that
	\begin{align*}
	\mathcal{L}(\widetilde{\Ab}_2, \widetilde{\Ab}_1)  = \mathcal{L}({\Ab_2}, {\Ab_1}) - \tfrac{\epsilon^2}{1 + \epsilon^2} (\sigma_i - \sigma_j) < \mathcal{L}({\Ab_2}, {\Ab_1}).
	\end{align*}
	Thus, the construction of the point $(\widetilde{\Ab}_2, \widetilde{\Ab}_1)$ achieves a lower function value for any $\epsilon > 0$. Letting $\epsilon \to 0$ and noticing that $\widetilde{\Mb}$ is a perturbation of $\Ub\Sb$, the point $(\widetilde{\Ab}_2, \widetilde{\Ab}_1)$ can be in an arbitrary neighborhood of $({\Ab_2}, {\Ab_1})$. Lastly, note that $\rank{\widetilde{\Ab}_2} = \rank{\Ab_2}$. This completes the proof of item 1.
	
	Next, we prove item 2. Consider an optimal-order critical point $({\Ab_1}, {\Ab_2})$. Then, $\Ab_2$ must be non-full rank, since otherwise a full rank $\Ab_2$ with optimal order corresponds to a global minimizer by \Cref{prop: global_min}. Since there exists some $k\le r$ such that $p_1 = m_1, \ldots, p_{k-1} = m_{k-1}, p_k = \rank{\Ab_2} - \sum_{i=1}^{k-1} m_i \le m_k$, the necessary form of $\Ab_2$ gives that $\Ab_2 = \Ub\Vb\Cb$ with $\Vb = [\mathrm{diag}(\Vb_1, \ldots, \Vb_k), \zero ] := [\Vb_{\mathrm{diag}}, \zero]$. Using this expression, \cref{eq: form_A1} yields that
	\begin{align*}
	\Ab_1 = \Cb^{-1} 
	\bigg(  
	\begin{bmatrix}
	(\Ub\Vb_{\mathrm{diag}})^\ttop \Yb \Xb^\dagger \\
	\zero
	\end{bmatrix} + 
	\Cb\Lb_1 - 
	\begin{bmatrix}
	(\Cb\Lb_1)_{1:\rank{\Ab_2},:} \Xb \Xb^\dagger \\
	\zero
	\end{bmatrix}
	\bigg).
	\end{align*}
	We now specify our perturbation scheme. Recalling the orthonormal matrix $\Sb$ defined in \cref{eq: S}. Then, we consider the following matrices for some $\epsilon_1, \epsilon_2 > 0$
	\begin{align*}
	\widetilde{\Ab}_2 &= [\Ub \Vb_{\mathrm{diag}}, \epsilon_2 \Ub \Sb_{:, q}, \zero]\Cb, ~\text{where}~q = \sum_{i=1}^{k-1} m_i + p_k + 1, \\
	\widetilde{\Ab}_1 &= \Cb^{-1} 
	\bigg(  
	\begin{bmatrix}
	(\Ub\Vb_{\mathrm{diag}})^\ttop \Yb \Xb^\dagger \\
	\zero
	\end{bmatrix} + 
	\Cb\Lb_1 - 
	\begin{bmatrix}
	(\Cb\Lb_1)_{1:\rank{\Ab_2},:} \Xb \Xb^\dagger \\
	\epsilon_1 (\Ub \Sb_{:, q})^\ttop \Yb \Xb^\dagger  \\
	\zero
	\end{bmatrix} 
	\bigg).
	\end{align*}
	Our goal is to show that $\Lc(\widetilde{\Ab}_1, \widetilde{\Ab}_2) < \Lc(\Ab_1, \Ab_2)$ for $\epsilon_1, \epsilon_2 \to 0$.  For this purpose, we need to utilize the condition of critical points in \cref{eq: sufficiency}, which can be equivalently expressed as
	\begin{align}
	&(\Ib - \Ub\Vb(\Ub\Vb)^\ttop)\Yb \Xb^\ttop \Lb_1^\ttop \Cb^\ttop(\Ib - \Vb^\ttop\Vb) = \bm 0 \nonumber\\
	\overset{(i)}{\Leftrightarrow} \quad&
	\begin{bmatrix}
	\zero \\ (\Cb \Lb_1)_{(\rank{\Ab_2} + 1):d_1, :} \Xb
	\end{bmatrix} \Yb^\ttop (\Ib - \Ub\Vb(\Ub\Vb)^\ttop) = \zero \nonumber\\
	\Leftrightarrow \quad& (\Cb \Lb_1)_{(\rank{\Ab_2} + 1):d_1, :} \Xb\Yb^\ttop (\Ib - \Ub\Vb(\Ub\Vb)^\ttop) = \zero  \label{eq: suffi_simple1}\\
	\overset{(ii)}{\Leftrightarrow} \quad& (\Cb \Lb_1)_{(\rank{\Ab_2} + 1):d_1, :} \Xb\Yb^\ttop (\Ib - \Ub\Sb_{:, 1:(q-1)}(\Ub\Sb_{:, 1:(q-1)})^\ttop) = \zero  \label{eq: suffi_simple}
	\end{align}
	where (i) follows by taking the transpose and then simplifying, and (ii) uses the fact that $\Vb = \Sb \Sb^\ttop \Vb = \Sb_{:, 1:(q-1)}$ in the  case of optimal-order critical point. Calculating the function value at $(\widetilde{\Ab}_1, \widetilde{\Ab}_2)$, we obtain that
	\begin{align*}
	\Lc(\widetilde{\Ab}_1, \widetilde{\Ab}_2) &= \tfrac{1}{2} \|\underbrace{\Ub \Vb_{\mathrm{diag}} (\Ub\Vb_{\mathrm{diag}})^\ttop \Yb \Xb^\dagger\Xb}_{\Qb} \\
	&\quad +   \underbrace{\epsilon_2 \Ub \Sb_{:, q} (\Cb\Lb_1)_{(\rank{\Ab_2} + 1), :} \Xb
		+ \epsilon_1\epsilon_2 \Ub \Sb_{:, q} (\Ub \Sb_{:, q})^\ttop \Yb \Xb^\dagger\Xb}_{\Pb}  - \Yb\|_F^2 \\
	&= \Lc(\Ab_1, \Ab_2) + \tfrac{1}{2}[\mathrm{Tr}(\Pb\Pb^\ttop) + 2\mathrm{Tr}(\Pb\Qb^\ttop) - 2 \mathrm{Tr} (\Pb\Yb^\ttop)].
	\end{align*}
	We next simplify the above three trace terms using \cref{eq: suffi_simple}. For the first trace term, observe that
	\begin{align*}
	\mathrm{Tr}(\Pb\Pb^\ttop) &= \Tr(\epsilon_2^2 \Ub \Sb_{:, q} (\Cb\Lb_1)_{(\rank{\Ab_2} + 1), :} \Xb \Xb^\ttop  (\Cb\Lb_1)_{(\rank{\Ab_2} + 1), :}^\ttop (\Ub \Sb_{:, q})^\ttop ) \\
	&\quad + 2\Tr(\epsilon_1\epsilon_2^2 \Ub \Sb_{:, q} (\Cb\Lb_1)_{(\rank{\Ab_2} + 1), :} \Xb \Yb^\ttop  \Ub \Sb_{:, q} (\Ub \Sb_{:, q})^\ttop )\\
	&\quad + \Tr(\epsilon_1^2\epsilon_2^2  \Ub \Sb_{:, q} (\Ub \Sb_{:, q})^\ttop  \Sigmab \Ub \Sb_{:, q} (\Ub \Sb_{:, q})^\ttop ) \\
	& \overset{(i)}{=}\Tr(\epsilon_2^2 \Ub \Sb_{:, q} (\Cb\Lb_1)_{(\rank{\Ab_2} + 1), :} \Xb \Xb^\ttop  (\Cb\Lb_1)_{(\rank{\Ab_2} + 1), :}^\ttop (\Ub \Sb_{:, q})^\ttop ) \\
	&\quad + \Tr(\epsilon_1^2\epsilon_2^2  \Ub \Sb_{:, q} (\Ub \Sb_{:, q})^\ttop  \Sigmab \Ub \Sb_{:, q} (\Ub \Sb_{:, q})^\ttop ) \\
	&= \epsilon_2^2\Tr( (\Cb\Lb_1)_{(\rank{\Ab_2} + 1), :} \Xb \Xb^\ttop  (\Cb\Lb_1)_{(\rank{\Ab_2} + 1), :}^\ttop ) + \epsilon_1^2\epsilon_2^2  \Tr(\Sb_{:, q}^\ttop  \Lambdab \Sb_{:, q} )
	\end{align*}
	where (i) follows from \cref{eq: suffi_simple} as $\Sb_{:, q}$ is orthogonal to the columns of $\Sb_{:, 1:(q-1)}$. For the second trace term, we obtain that
	\begin{align*}
	2\mathrm{Tr}(\Pb\Qb^\ttop) & = 2\Tr(\epsilon_2 \Ub \Sb_{:, q} (\Cb\Lb_1)_{(\rank{\Ab_2} + 1), :} \Xb\Yb^\ttop \Ub \Vb_{\mathrm{diag}} (\Ub\Vb_{\mathrm{diag}})^\ttop) \\
	&\quad + 2\Tr(\epsilon_1\epsilon_2   \Ub \Sb_{:, q} (\Ub \Sb_{:, q})^\ttop  \Sigmab \Ub \Vb_{\mathrm{diag}} (\Ub\Vb_{\mathrm{diag}})^\ttop) \\
	&=2\Tr(\epsilon_2 \Ub \Sb_{:, q} (\Cb\Lb_1)_{(\rank{\Ab_2} + 1), :} \Xb\Yb^\ttop \Ub \Vb_{\mathrm{diag}} (\Ub\Vb_{\mathrm{diag}})^\ttop) \\
	&\quad + 2\Tr(\epsilon_1\epsilon_2   \Ub \Sb_{:, q} \Sb_{:, q}^\ttop  \Lambdab \Sb \Sb^\ttop\Vb_{\mathrm{diag}} (\Ub\Vb_{\mathrm{diag}})^\ttop) \\
	&\overset{(i)}{=}2\Tr(\epsilon_2 \Ub \Sb_{:, q} (\Cb\Lb_1)_{(\rank{\Ab_2} + 1), :} \Xb\Yb^\ttop \Ub \Vb_{\mathrm{diag}} (\Ub\Vb_{\mathrm{diag}})^\ttop) \\
	&\quad + 2\Tr(\epsilon_1\epsilon_2\sigma_k   \Ub \Sb_{:, q} \eb_q^\ttop \Sb^\ttop\Vb_{\mathrm{diag}} (\Ub\Vb_{\mathrm{diag}})^\ttop) \\
	&\overset{(ii)}{=}2\Tr(\epsilon_2 \Ub \Sb_{:, q} (\Cb\Lb_1)_{(\rank{\Ab_2} + 1), :} \Xb\Yb^\ttop \Ub \Vb_{\mathrm{diag}} (\Ub\Vb_{\mathrm{diag}})^\ttop),
	\end{align*}
	where (i) follows from $\Sb_{:, q}^\ttop  \Lambdab \Sb = \sigma_k \eb_q^\ttop$, and (ii) follows from $\eb_q^\ttop \Sb^\ttop\Vb_{\mathrm{diag}} = \zero$.  For the third trace term, we obtain that
	\begin{align*}
	2\Tr(\Pb \Yb^\top) &= 2\Tr(\epsilon_2 \Ub \Sb_{:, q} (\Cb\Lb_1)_{(\rank{\Ab_2} + 1), :} \Xb\Yb^\ttop) + 2\Tr( \epsilon_1\epsilon_2 \Ub \Sb_{:, q} (\Ub \Sb_{:, q})^\ttop \Sigmab ) \\
	&=2\Tr(\epsilon_2 \Ub \Sb_{:, q} (\Cb\Lb_1)_{(\rank{\Ab_2} + 1), :} \Xb\Yb^\ttop) + 2\Tr( \epsilon_1\epsilon_2 \Sb_{:, q}^\ttop  \Lambdab \Sb_{:, q} ).
	\end{align*}
	Combining the expressions for the three trace terms above, we conclude that
	\begin{align*}
	\tfrac{1}{2}[\mathrm{Tr}(\Pb\Pb^\ttop) &+ 2\mathrm{Tr}(\Pb\Qb^\ttop) - 2 \mathrm{Tr} (\Pb\Yb^\ttop)] \\
	&= \tfrac{1}{2}\epsilon_2^2\Tr( (\Cb\Lb_1)_{(\rank{\Ab_2} + 1), :} \Xb \Xb^\ttop  (\Cb\Lb_1)_{(\rank{\Ab_2} + 1), :}^\ttop ) \\
	&\quad + (\tfrac{1}{2}\epsilon_1^2\epsilon_2^2  - \epsilon_1\epsilon_2) \Tr(\Sb_{:, q}^\ttop  \Lambdab \Sb_{:, q} )  \\
	&\quad + 2\epsilon_2\Tr(\Ub \Sb_{:, q} (\Cb\Lb_1)_{(\rank{\Ab_2} + 1), :} \Xb\Yb^\ttop [\Ub \Vb_{\mathrm{diag}} (\Ub\Vb_{\mathrm{diag}})^\ttop - \Ib]) \\
	&\overset{(i)}{=} \tfrac{1}{2}\epsilon_2^2\Tr( (\Cb\Lb_1)_{(\rank{\Ab_2} + 1), :} \Xb \Xb^\ttop  (\Cb\Lb_1)_{(\rank{\Ab_2} + 1), :}^\ttop ) \\
	&\quad + (\tfrac{1}{2}\epsilon_1^2\epsilon_2^2  - \epsilon_1\epsilon_2) \Tr(\Sb_{:, q}^\ttop  \Lambdab \Sb_{:, q} ) \\
	&= \tfrac{1}{2}\epsilon_2^2\Tr( (\Cb\Lb_1)_{(\rank{\Ab_2} + 1), :} \Xb \Xb^\ttop  (\Cb\Lb_1)_{(\rank{\Ab_2} + 1), :}^\ttop ) + (\tfrac{1}{2}\epsilon_1^2\epsilon_2^2  - \epsilon_1\epsilon_2) \sigma_k,
	\end{align*}
	where (i) follows from \cref{eq: suffi_simple1}.  Note that the first term in the last equation is nonnegative. Now, letting $\epsilon_2 = \epsilon_1^2 \to 0$, the overall perturbation of the function value becomes 
	\begin{align*}
	\tfrac{1}{2}[\mathrm{Tr}(\Pb\Pb^\ttop) &+ 2\mathrm{Tr}(\Pb\Qb^\ttop) - 2 \mathrm{Tr} (\Pb\Yb^\ttop)] = \Oc(\epsilon_1^4) - \Oc(\epsilon_1^3) < 0.
	\end{align*}
	Thus, the constructed perturbation $(\widetilde{\Ab}_1, \widetilde{\Ab}_2)$ achieves a lower function value, and it can be in an arbitrary neighborhood of $(\Ab_1, \Ab_2)$ as $\epsilon_2 = \epsilon_1^2 \to 0$. Lastly, note that $\rank{\widetilde{\Ab}_2} = \rank{\Ab_2} + 1$. 
	
	Next, we prove item 3. We first introduce a technical lemma. Consider row vectors $\ab^\ttop, \yb^\ttop$ with $\ab^\ttop \ne \zero$. Define the scalar function $f(\alpha) = \frac{1}{2} \norm{\alpha\ab^\ttop - \yb^\ttop}_2^2$. Then, $f(\alpha)$ is strongly convex as $f''(\alpha) = \norm{\ab}_2^2 > 0$. Thus, the following fact holds due to strong convexity.
	\begin{fact}
		For any $\alpha \in \RR$, we can identify a perturbation $\widetilde \alpha$ such that $f(\widetilde{\alpha}) > f(\alpha)$.
	\end{fact}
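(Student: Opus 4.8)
The plan is to exploit the fact that $f$ is a quadratic function of the scalar $\alpha$, so that its second-order Taylor expansion about any base point is exact with no remainder. First I would write, for a perturbation $\widetilde{\alpha} = \alpha + t$,
\begin{align*}
f(\alpha + t) = f(\alpha) + f'(\alpha)\, t + \tfrac{1}{2}\norm{\ab}_2^2\, t^2,
\end{align*}
which is an identity precisely because $f'' \equiv \norm{\ab}_2^2$ is constant. The problem then reduces to exhibiting an arbitrarily small $t \ne 0$ for which $g(t) := f'(\alpha)\, t + \tfrac{1}{2}\norm{\ab}_2^2\, t^2 > 0$.

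I would then split into two cases according to the value of $f'(\alpha)$. If $f'(\alpha) \ne 0$, I would choose $t = \epsilon\,\mathrm{sign}(f'(\alpha))$ for small $\epsilon > 0$, so that $g(t) = |f'(\alpha)|\,\epsilon + \tfrac{1}{2}\norm{\ab}_2^2\,\epsilon^2 > 0$: the strictly positive linear term dominates for small $\epsilon$. If instead $f'(\alpha) = 0$ — that is, $\alpha$ is the unique global minimizer of $f$ — then $g(t) = \tfrac{1}{2}\norm{\ab}_2^2\, t^2 > 0$ for every $t \ne 0$, where the assumption $\ab^\ttop \ne \zero$ ensures $\norm{\ab}_2^2 > 0$. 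In either case $\widetilde{\alpha} = \alpha + t$ lies in an arbitrarily small neighborhood of $\alpha$ and satisfies $f(\widetilde{\alpha}) > f(\alpha)$, as claimed.

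The main obstacle here is essentially nonexistent: the statement is a direct consequence of the strong convexity already noted, since a strongly convex scalar function has no local maximum and strictly increases along some direction from every point. The only point requiring any care is the degenerate case $f'(\alpha) = 0$, where the first-order term vanishes and the strict increase is supplied entirely by the positive curvature $\norm{\ab}_2^2 > 0$; this is exactly where the hypothesis $\ab^\ttop \ne \zero$ is indispensable.
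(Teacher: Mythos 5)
Your argument is correct and is essentially the explicit version of what the paper does: the paper simply invokes strong convexity of the quadratic $f$ (noting $f''(\alpha) = \norm{\ab}_2^2 > 0$), while you carry out the exact second-order expansion and the case split on $f'(\alpha)$. No substantive difference in approach.
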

	Now consider any point $(\Ab_1, \Ab_2) \in \Xc$. Since $\Ab_2\Ab_1 \Xb \ne \zero$, then there exists a certain row, say, the $i$-th row $(\Ab_2)_{i,:}\Ab_1 \Xb$, that is nonzero. We then apply the above fact with $\alpha = 1, \ab^\ttop = (\Ab_2)_{i,:}\Ab_1 \Xb, \yb^\ttop = \Yb_{i,:}$, and conclude that one can find a perturbation $\widetilde{\alpha}$ that achieves a higher function value. Equivalently, one can treat the perturbation of $\alpha$ as the perturbation of $(\Ab_2)_{i,:}$, i.e., define the perturbation $(\widetilde{\Ab}_2)_{i,:} := \widetilde{\alpha} (\Ab_2)_{i,:}$.

\section*{Proof of \Cref{thm: deep_criti_form}}
	We first derive the forms of the parameter matrices. Consider a critical point $(\Ab_1, \ldots, \Ab_\ell)$ of $\Lc_D$. By definition of the critical point, we have $\nabla_{\Ab_k} \Lc_D = 0$ for all $k = 1, \ldots, \ell$, which implies that
	\begin{align}
	\Ab_{(\ell,k+1)}^\ttop \Ab_{(\ell,k+1)} \Ab_k \Ab_{(k-1,1)} \Xb (\Ab_{(k-1,1)} \Xb)^\ttop = \Ab_{(\ell,k+1)}^\ttop \Yb \Xb^\ttop \Ab_{(k-1,1)}^\ttop. \label{eq: grad_A_k}
	\end{align}
	Solving this linear system of $\Ab_k$, we obtain that, for some $\Lb_k\in \RR^{d_k\times d_{k-1}}$
	\begin{align}
	\Ab_k = \Ab_{(\ell,k+1)}^\dagger \Yb (\Ab_{(k-1,1)} \Xb)^\dagger + \Lb_k - \Ab_{(\ell,k+1)}^\dagger\Ab_{(\ell,k+1)}\Lb_k \Ab_{(k-1,1)} \Xb (\Ab_{(k-1,1)} \Xb)^\dagger  \label{eq: A_k}
	\end{align}
	Multiplying \cref{eq: A_k} on both sides by $\Ab_{(\ell,k+1)}$ on the left and $\Ab_{(k-1,1)}\Xb$ on the right and then simplifying, we obtain that for all $k = 1, \ldots, \ell - 1$
	\begin{align}
	\Ab_{(\ell,1)}\Xb = \Pc_{\col(\Ab_{(\ell,k+1)})} \Yb (\Ab_{(k-1,1)} \Xb)^\dagger \Ab_{(k-1,1)} \Xb.  \label{eq: product_deep}
	\end{align}
	On the other hand, applying \cref{eq: grad_A_k} with $k=\ell$ and multiplying both sides by $\Ab_\ell^\ttop$ on the right, one obtains that
	\begin{align}
	\Ab_{(\ell,1)}\Xb \Xb^\ttop \Ab_{(\ell,1)}^\ttop = \Yb \Xb^\ttop\Ab_{(\ell,1)}^\ttop, \label{eq: last_grad_weak}
	\end{align}
	which, together with \cref{eq: product_deep}, further implies that for all $k= 1, \ldots, \ell - 1$
	\begin{align}
	\Pc_{\col(\Ab_{(\ell,k+1)})} \Sigmab_{k-1}\Pc_{\col(\Ab_{(\ell,k+1)})} = \Sigmab_{k-1} \Pc_{\col(\Ab_{(\ell,k+1)})}. \label{eq: proj_deep}
	\end{align}
	Following the same argument as that in the proof of \Cref{thm: necess_form}, we conclude that $\Ab_{(\ell,k+1)} = \Ub_{k-1} \Vb_{k-1} \Cb_{k-1}$, where $\Ub_{k-1}, \Vb_{k-1}, \Cb_{k-1}$ satisfy the conditions that are stated in the theorem. Then, plugging the expression of $\Ab_{(\ell,k+1)}$ in \cref{eq: A_k}, one obtains the form of  $\Ab_k$ in \cref{eq: A_k_form}. 
	
	We next prove the conditions in \cref{eq: sufficiency_deep}. Note that the first condition is simply a consistency condition on the matrix products. This is because \cref{eq: proj_deep}  only provides the forms of the matrices $\Ab_{(\ell,k+1)}, k = 1, \ldots, l-1$, which must factorize into the product of individual matrices. For the other condition in \cref{eq: sufficiency_deep}, note that the proof of \cref{eq: last_grad_weak} uses the weaker condition $(\nabla_{\Ab_\ell} \Lc_D) \Ab_\ell^\ttop = \zero$ than the original condition $\nabla_{\Ab_\ell} \Lc_D= \zero$ of the critical point. Thus, the forms of the parameter matrices must also satisfy $\nabla_{\Ab_\ell} \Lc_D = \zero$, i.e., $\Ab_{(\ell,1)} \Xb (\Ab_{(\ell-1,1)} \Xb)^\ttop = \Yb \Xb^\ttop \Ab_{(\ell-1,1)}^\ttop.$
	Then, plugging \cref{eq: product_deep} in the above condition and simplifying, one obtains \cref{eq: sufficiency_deep}.

\section*{Proof of \Cref{prop: deep_loss_value}}
Note that by expansion $\Lc_D = \frac{1}{2}\mathrm{Tr}(\Yb\Yb^\ttop) - \mathrm{Tr}(\Ab_{(\ell, 1)} \Xb\Yb^\ttop) + \frac{1}{2} \mathrm{Tr}(\Ab_{(\ell, 1)} \Xb\Xb^\ttop\Ab_{(\ell, 1)}^\ttop)$. For any $\Ab_1, \ldots,\Ab_\ell$ that satisfy \cref{eq: grad_A_k}, we have shown that they must satisfy \cref{eq: product_deep} with $k = 1$, with which we further obtain that
\begin{align}
\Lc &= \tfrac{1}{2}\mathrm{Tr}(\Yb\Yb^\ttop) - \tfrac{1}{2}\mathrm{Tr}(\Pc_{\col(\Ab_{(\ell, 2)})}\Sigmab_0) \nonumber\\
&= \tfrac{1}{2}\mathrm{Tr}(\Yb\Yb^\ttop) - \tfrac{1}{2}\mathrm{Tr} (\Pc_{\col(\Ub_0^\ttop \Ab_{(\ell, 2)})}\Lambdab_0). \label{eq: func_value2}
\end{align}
Consider a critical point $(\Ab_1, \ldots,\Ab_\ell)$ so that \cref{eq: func_value2} holds. Using the form of critical points $\Ab_{(\ell, 2)} = \Ub_0\Vb_0\Cb_0$, \cref{eq: func_value2} further becomes
\begin{align*}
\Lc &=  \tfrac{1}{2}\mathrm{Tr}(\Yb\Yb^\ttop) - \tfrac{1}{2}\mathrm{Tr} (\Pc_{\col(\Vb_0\Cb_0)}\Lambdab_0)\\
&= \tfrac{1}{2}\mathrm{Tr}(\Yb\Yb^\ttop) - \tfrac{1}{2}\mathrm{Tr} (\Vb_0^\ttop\Lambdab_0\Vb_0) \\
&\overset{(i)}{=}\tfrac{1}{2}\mathrm{Tr}(\Yb\Yb^\ttop) - \tfrac{1}{2}\sum_{i=1}^{r(0)} p_i(0) \sigma_i(0),
\end{align*}
where (i) utilizes the block pattern of $\Vb_0$ and the multiplicity pattern of $\Lambdab_0$ that are specified in \Cref{thm: deep_criti_form}.

\section*{Proof of \Cref{prop: deep_global_min}}
Observe that the product matrix $\Ab_{(\ell, 2)}$ is equivalent to the class of matrices $\Bb_2 \in \RR^{\min \{d_\ell,\ldots,d_2\} \times d_1 }$. Consider a critical point $(\Bb_2, \Ab_1)$ of the shallow linear network $\Lc := \frac{1}{2} \norm{\Bb_2 \Ab_1 \Xb  - \Yb}_F^2$. By \Cref{prop: global_min},  we conclude that 
\begin{itemize}
	\item If $\min \{d_\ell,\ldots, d_1\} \le \sum_{i=1}^{r(0)} m_i(0)$, then $(\Ab_1, \Bb_2)$ is a global minimizer if and only if $\Bb_2$ is full rank and $p_1 = m_1(0), \ldots, p_{k-1} = m_{k-1}(0), p_k = \rank{\Bb_2} - \sum_{i=1}^{k-1} m_i(0) \le m_k(0)$ for some $k\le r(0)$;
	\item If $\min \{d_\ell,\ldots, d_1\} > \sum_{i=1}^{r(0)} m_i(0)$, then $(\Ab_1, \Bb_2)$ is a global minimizer if and only if $p_i = m_i(0)$ for all $i = 1,\ldots,r(0)$ and $\bar{p}(0) \ge 0$. In particular, $\Bb_2$ can be non-full rank with $\rank{\Bb_2} =  \sum_{i=1}^{r(0)} m_i(0)$.
\end{itemize}
Note that $\Lc_D$ achieves the same global minimum as $\Lc$. Hence \Cref{lemma: loss_value} and \Cref{prop: deep_loss_value} must match, which yields that $\sum_{i=1}^{r(0)} p_i \sigma_i(0) = \sum_{i=1}^{r(0)} p_i(0) \sigma_i(0).$ We then conclude that $p_i(0) = p_i$ for $i = 1, \ldots, r(0)$ as $\sigma_1(0) > \cdots > \sigma_{r(0)} (0)$. This proves the proposition.

\section*{Proof of \Cref{thm: deep_critic_perturb}}
	The proof is similar to that for shallow linear networks.
	Consider a deep-non-optimal-order critical point $(\Ab_1,\ldots,\Ab_\ell)$, and define the orthonormal block matrix $\Sb_k$ using the blocks of $\Vb_k$ in a similar way as \cref{eq: S}. Then, $\Ab_{(l,k+2)}$ takes the form $\Ab_{(l,k+2)} = \Ub_k \Sb_k \Sb_k^\ttop \Vb_k \Cb_k$. Since $\Ab_{(l,k+2)}$ is of non-optimal order, there exists $i<j<r(k)$ such that $p_i (k) <m_i(k)$ and $p_j(k) > 0$. Thus, we perturb the $j$-th column of $\Ub_k \Sb_k$ to be $\frac{\ub_{j 1}^s + \epsilon\ub_{i (p_{i}(k) + 1)}^s }{\sqrt{1+\epsilon^2}}$, and denote the resulting matrix as $\widetilde{\Mb}_k$. Then, we perturb $\Ab_{\ell}$ to be $\widetilde{\Ab}_\ell = \widetilde{\Mb}_k (\Ub_k \Sb_k)^\ttop \Ab_\ell$ so that $\widetilde{\Ab}_\ell \Ab_{(\ell-1, k+2)} =\widetilde{\Mb}_k \Sb_k^\ttop \Vb_k \Cb_k$. Moreover, we generate $\widetilde{\Ab}_{k+1}$ by \cref{eq: A_k_form} with $\Ub_k\gets \widetilde{\Mb}_k, \Vb_k \gets \Sb_k^\ttop \Vb_k$. Note that such construction satisfies \cref{eq: grad_A_k}, and hence also satisfies \cref{eq: product_deep}, which further yields that
	\begin{align*}
		\widetilde{\Ab}_{\ell}\Ab_{(\ell - 1, k+2)} \widetilde{\Ab}_{k+1}\Ab_{(k,1)} \Xb = \Pc_{\col(	\widetilde{\Ab}_{\ell}\Ab_{(\ell - 1, k+2)})} \Yb (\Ab_{(k,1)} \Xb)^\dagger \Ab_{(k,1)} \Xb.
	\end{align*}
	With the above equation, the function value at this perturbed point is evaluated as
	\begin{align*}
	\mathcal{L}(\widetilde{\Ab}_\ell,\ldots, \widetilde{\Ab}_{k+1},\ldots)  = \tfrac{1}{2}\mathrm{Tr}(\Yb\Yb^\ttop) - \tfrac{1}{2}\mathrm{Tr} (\Pc_{\col(\Sb_k^\ttop\Ub_k^\ttop \widetilde{\Mb}_k \Sb_k^\ttop \Vb_k)}\Lambdab_k).
	\end{align*}
	Then, a careful calculation shows that only the $i,j$-th diagonal elements of $\Pc_{\col(\Sb_k^\ttop\Ub_k^\ttop \widetilde{\Mb}_k \Sb_k^\ttop \Vb_k)}\Lambdab$ have changed, and are $\frac{\epsilon^2}{1+\epsilon^2}, \frac{1}{1 + \epsilon^2}$, respectively. We then conclude that 
	\begin{align*}
	\mathcal{L}(\Ab_1, \ldots, \widetilde{\Ab}_{k+1}, \ldots, \widetilde{\Ab}_\ell)  = \mathcal{L}({\Ab_1},\ldots,{\Ab_\ell}) - \tfrac{\epsilon^2}{1 + \epsilon^2} (\sigma_i(k) - \sigma_j(k)) < \mathcal{L}({\Ab_1},\ldots,{\Ab_\ell}).
	\end{align*}
	
	Now consider  a deep-optimal-order critical point $(\Ab_1,\ldots,\Ab_\ell)$. Note that with $\Ab_{(\ell-2, 1)}$ fixed to be a constant, the deep linear network reduces to a shallow linear network with parameters $(\Ab_{\ell}, \Ab_{\ell-1})$. Since $(\Ab_{\ell}, \Ab_{\ell-1})$ is not a non-global minimum critical point of this shallow linear network and $\Ab_\ell$ is of optimal-order, we can apply the perturbation scheme in the proof of \Cref{thm: critic_perturb} to identify  a perturbation $(\widetilde{\Ab}_\ell, \widetilde{\Ab}_{\ell-1})$ with $\rank{\widetilde{\Ab}_\ell} = \rank{\Ab_\ell} + 1$ that achieves a lower function value.
	
	Consider any point in $\Xc_D$. Since $\Ab_{(\ell,1)} \Xb \ne \zero$, we can scale the nonzero row, say, the $i$-th row $(\Ab_\ell)_{i,:} \Ab_{(\ell-1, 1)}\Xb$ properly in the same way as that in the proof of \Cref{thm: critic_perturb} to increase the function value. Lastly, item 1 and item 2 imply that every local minimum is a global minimum for these two types of critical points. Moreover, combining items 1,2 and 3, we conclude that every critical point of these two types in $\Xc_D$ is a saddle point.

}

\end{document}